%% file: aaai2027-lens-arxiv-v1.tex
\documentclass[letterpaper]{article}
\usepackage[preprint]{aaai2027}
\usepackage[hyphens]{url}
\usepackage{graphicx}
\usepackage{natbib}
\usepackage{caption}
\usepackage{algorithm}
\usepackage{algorithmic}
\usepackage{amsmath}
\usepackage{amssymb}
\usepackage{amsthm}
\usepackage{booktabs}
\usepackage{multirow}

\newtheorem{definition}{Definition}
\newtheorem{proposition}{Proposition}

\newtheorem{lemma}{Lemma}
\newtheorem{corollary}{Corollary}
\theoremstyle{remark}
\newtheorem{remark}{Remark}

\begin{document}

\title{LENS: In-Context Search via Latent Evidence Exploration over Dynamic Raw Documents}

\author{
Xingjun Wang,
Gongsheng Li,
Qi Fan,
Yunlin Mao,
Luyan Su,
Yingda Chen$^{\dagger}$
}
\affiliations{
ModelScope Team, Alibaba Group\\
Hangzhou, China\\
\{xingjun.wxj, ligongsheng.lgs, luoqifan.fq, maoyunlin.myl, suluyan.sly, yingda.chen\}@alibaba-inc.com
}

\maketitle
\begingroup
\renewcommand{\thefootnote}{\fnsymbol{footnote}}
\footnotetext[2]{Corresponding author.}
\endgroup

\begin{abstract}
Large language model agents increasingly need to answer questions over dynamic raw-document collections, where files may be added or updated before evidence can be preprocessed into fixed representations. Relevant evidence may appear as spans, sections, pages, or tables whose usefulness is query-dependent. Existing retrieval-augmented approaches typically materialize the evidence space before querying through fixed chunking, embeddings, or persistent indexes. While effective for lookup, such representations impose preprocessing cost, can become stale after document updates, and commit to an evidence granularity before the query is known.

We formulate in-context search as Budgeted Evidence Localization over a latent evidence space induced by dynamic raw documents, and propose Latent Evidence Exploration and Search (LENS) as an index-free framework for this setting. Rather than pre-materializing the full evidence space, LENS maintains a query-conditioned belief over candidate evidence units, iteratively selecting candidates through complementary lexical, local, and exploratory proposal policies, updating the belief with observations from an LLM-based relevance oracle, and narrowing the search toward high-posterior evidence regions under a controllable budget. The resulting evidence is consolidated into compact, source-grounded regions of interest and further compressed into self-organizing knowledge clusters for reuse across semantically related queries.

On a controlled 500-question evaluation with matched corpus snapshots, LENS achieves 62.4\% exact match and 84.8\% evidence recall, while a ReAct-style iterative baseline achieves 65.2\% exact match but only 50.4\% evidence recall. Across controlled scales, LENS consistently provides the strongest supporting-fact localization and answer grounding. On a fixed 150-question fullwiki subset over the raw Wikipedia dump with zero indexing, LENS and ReAct are nearly tied in official answer quality (43.3\% vs. 42.7\% EM), while LENS grounds a larger share of answers in retrieved evidence (84.0\% vs. 70.7\%). A no-retrieval Closed-Book reference highlights the contribution of model memory and motivates reporting retrieval gains relative to this baseline. LENS remains query-ready immediately after corpus changes, requires no preprocessing or persistent index, and preserves source-grounded evidence localization throughout.
\end{abstract}

\section{Introduction}
\label{sec:intro}

Large language model (LLM) agents increasingly operate over collections of raw documents that evolve faster than evidence can be reliably preprocessed into fixed representations. In such settings, relevant evidence is not a stable object known before the query. It may be a paragraph span, a table entry, a section, a page, or a cross-document chain whose appropriate granularity depends on both the question and the current document state. This makes document-grounded question answering different from retrieval over a static corpus of pre-segmented passages.

Existing retrieval-augmented approaches typically address document question answering by materializing the evidence space before querying, through chunking, dense embeddings, summaries, persistent sparse indexes, or graph-like memory structures~\cite{lewis2020retrieval,karpukhin2020dense,izacard2022atlas}. These representations are effective for static lookup, especially when the corpus is stable and preprocessing cost can be amortized. However, in dynamic raw-document collections, pre-materialization introduces a trade-off: setup and update costs must be paid before querying, indexes can become stale after document changes, and fixed chunks commit to an evidence granularity before the query reveals what evidence is needed.

The central difficulty is that the evidence space induced by raw documents is latent, variable-boundary, dynamic, and structured. It is latent because answer-bearing evidence exists in the documents but is not known in advance; variable-boundary because the useful evidence windows are not limited to a fixed set of chunks, which makes the space discrete but combinatorially large; dynamic because document updates change the space itself; and structured because lexical, layout, path, and historical signals induce non-uniform priors over likely evidence regions. Treating this space as a fixed finite collection of chunks can therefore obscure the actual search problem faced by an LLM agent.

We formulate this setting as \emph{Budgeted Evidence Localization} over a \emph{latent evidence space} induced by dynamic raw documents. The goal is not merely to return the nearest precomputed passages, but to infer a compact source-grounded evidence set under token, latency, and oracle-call constraints. This formulation makes the cost of using an LLM as a relevance oracle explicit, while preserving the query-conditioned nature of evidence granularity and document freshness.

To make this formulation practical, we propose \textbf{Latent Evidence Exploration and Search (LENS)}. LENS first forms a low-cost prior over candidate evidence regions using document signals available before any expensive oracle interaction. It then performs sequential exploration: each observation from an LLM-based relevance oracle updates a belief over candidate evidence units and guides subsequent exploration toward regions with higher expected utility. Finally, LENS consolidates selected evidence into compact, source-grounded regions and organizes confirmed evidence for reuse across semantically related follow-up queries.

We evaluate LENS under a dynamic raw-corpus protocol built around nested question sets and matched raw-document snapshots. This protocol measures answer quality, evidence localization, freshness under corpus growth, and query-time budget. All systems are compared under identical sampled questions and corpus boundaries, allowing paired analysis while keeping the raw-corpus setting auditable.

Our contributions are as follows. First, we formulate in-context search over evolving raw-document collections as Budgeted Evidence Localization over a latent evidence space. Second, we introduce LENS, an index-free sequential exploration framework that combines low-cost priors, oracle-guided evidence refinement, budget-aware stopping, and source-grounded consolidation. Third, we design a dynamic raw-corpus evaluation protocol that jointly reports answer quality, evidence localization, freshness, budget-normalized quality, and no-retrieval reference scores. Finally, we provide mechanism evidence through source-grounding diagnostics and an ablation that isolates multi-signal prior formation from sequential exploration, rather than relying only on aggregate answer scores.

\section{Background}
\label{sec:background}

We study document collections whose contents and boundaries may change between queries. Files can be added, updated, or removed, and the evidence needed by a query may be located in spans, tables, pages, or cross-document relations. This setting differs from static open-domain retrieval because both the corpus state and the appropriate evidence granularity are query-dependent.

Retrieval-augmented generation typically constructs a finite representation of the evidence space before a query arrives. Chunking, embedding indexes, sparse indexes, summary trees, and document graphs all instantiate this strategy. These representations are useful when the corpus is stable, but they commit to a fixed representation before the query is known and must be rebuilt or updated when the underlying documents change.

LLMs provide strong semantic relevance judgments, but each oracle interaction consumes tokens, latency, and cost. A search method over dynamic raw documents therefore needs to trade off immediate relevance, information gain, source traceability, and budget. LENS uses this view to cast in-context search as sequential evidence localization rather than as a one-shot top-$k$ lookup problem.

\section{Budgeted Evidence Localization}
\label{sec:bel}

We consider a collection of raw documents that evolves over time. Let
\begin{equation}
\mathcal{D}_t = \{d_1^{(t)}, d_2^{(t)}, \ldots, d_N^{(t)}\}
\end{equation}
denote the document collection at time $t$. A query $q$ arrives after the current corpus state is fixed, and the system must answer using evidence from $\mathcal{D}_t$ rather than from a stale representation of $\mathcal{D}_{t-1}$.

\begin{definition}[Latent Evidence Space]
For a dynamic raw-document collection $\mathcal{D}_t$, the latent evidence space is
\begin{equation}
\mathcal{E}_t \triangleq \{(d,s,e) \mid d \in \mathcal{D}_t,\ 0 \le s < e \le |d|\}.
\end{equation}
Each element denotes a candidate evidence window in a raw document.
\end{definition}

The space $\mathcal{E}_t$ is not explicitly enumerated by LENS. It is latent because the answer-bearing region is unknown before querying, variable-boundary because window boundaries may vary at span-level resolution, which makes $\mathcal{E}_t$ finite but combinatorially large in $|d|$, dynamic because $\mathcal{E}_t$ changes with $\mathcal{D}_t$, and structured because document paths, textual anchors, compiled summaries, and prior successful searches induce non-uniform beliefs over the space.

Queries impose different evidence requirements. We use an intent variable
\begin{equation}
\begin{aligned}
I(q) \in \{&\text{lookup},\text{ computation},\\ &\text{comparison},\text{ aggregation},\text{ summarize}\}
\end{aligned}
\end{equation}
to indicate whether a query can be answered by a localized span, requires multiple atomic facts, or requires a higher-level synthesis. The intent induces a set of data requirements
\begin{equation}
\mathcal{D}_{\mathrm{req}}(q,I)=\{f_1,\ldots,f_K\},
\end{equation}
a decomposition of the query into atomic facts together with optional transformation rules that combine them. Lookup queries have $K=1$, whereas comparison, computation, and aggregation queries have $K>1$ and their facts may reside in different documents.

Evidence localization is therefore defined per atomic fact rather than per query.

\begin{definition}[Per-Fact Evidence Target]
\label{def:target}
For each $f_j \in \mathcal{D}_{\mathrm{req}}(q,I)$, let $Z_j^* \in \mathcal{E}_t$ be a minimal sufficient evidence window for $f_j$. The evidence target of the query is the collection $\mathbf{Z}^*=\{Z_1^*,\ldots,Z_K^*\}$, and localization is complete only when every requirement is covered.
\end{definition}

This per-fact formulation is what makes the target well defined on multi-hop queries. A comparison between two entities needs one fact from each of two documents, so no single contiguous window is sufficient and a query-level single-window target would not exist. Defining the target per fact keeps the inference object a single window --- so that beliefs and stopping rules remain tractable --- while the query-level output is a set, which is also what the system returns.

For a fact $f_j$, the ideal inference target is the posterior
\begin{equation}
\begin{aligned}
P(Z_j^* \mid f_j,q,\mathcal{H}_t)
&\propto P(\mathcal{H}_t \mid Z_j^*, f_j, q)\\
&\quad \cdot \pi_{\mathrm{prior}}(Z_j^* \mid f_j,q,\mathcal{D}_t).
\end{aligned}
\end{equation}
where $\pi_{\mathrm{prior}}$ is a query-conditioned initial belief and $\mathcal{H}_t$ is the history of oracle observations. Because the prior already conditions on the query, evidence about $Z_j^*$ accrues through the observation likelihood. This likelihood is not available in closed form, so LENS treats an LLM as a costly relevance oracle and approximates posterior concentration through sequential observations.

A budget $B$ limits the number of oracle calls, tokens, or wall-clock time available for a query. The output is a pair $(E^*, a)$, where $E^*$ is a compact, source-grounded evidence set that covers the localized windows $\{\hat Z_j\}_{j\le K}$ and $a$ is the synthesized answer. This distinguishes evidence localization from pure answer generation: a correct answer without traceable evidence is insufficient for the setting considered here.

\section{The LENS Algorithm}
\label{sec:algorithm}

\begin{figure*}[t]
\centering
\includegraphics[width=\textwidth]{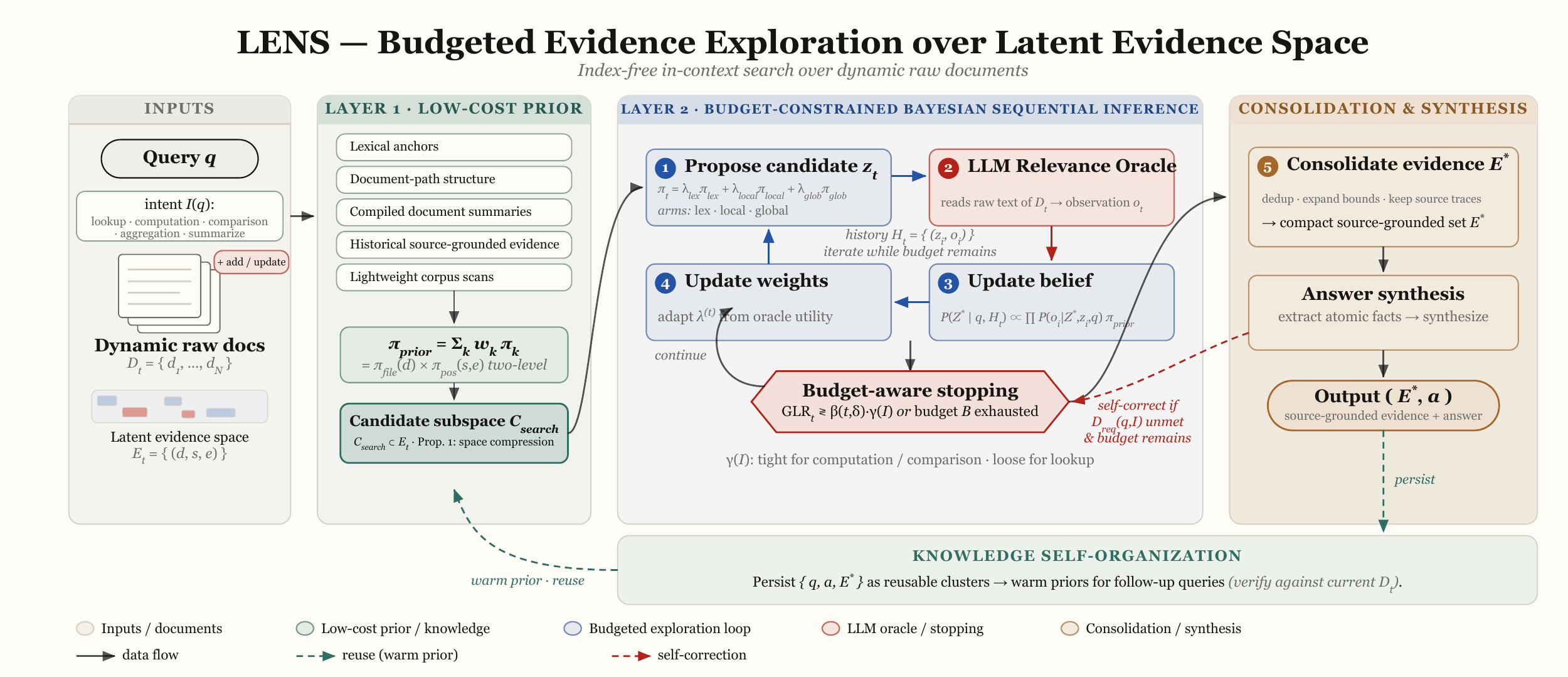}
\caption{Overall LENS framework. LENS forms a query-conditioned prior over candidate evidence regions, runs a budget-constrained propose--observe--update loop, and consolidates selected regions into a compact source-grounded evidence set for answer synthesis. It never pre-materializes a persistent index over the raw-document collection.}
\label{fig:lens_pipeline}
\end{figure*}

Figure~\ref{fig:lens_pipeline} presents the overall framework. LENS first constructs a low-cost prior over the latent evidence space, then performs budget-constrained sequential inference in a propose--observe--update loop. Finally, it consolidates high-belief regions into a compact source-grounded evidence set for answer synthesis. This separation is central: prior formation cheaply narrows the search domain, while sequential refinement spends oracle calls only where they are informative.

\subsection{Layer 1: Low-Cost Prior over Latent Evidence}

Directly exploring $\mathcal{E}_t$ is infeasible. As shown on the left of Figure~\ref{fig:lens_pipeline}, LENS first compresses the search space by fusing five families of signals that are available without reading the corpus through an LLM: lexical anchors, document-path structure, compiled document summaries when available, historical source-grounded evidence from prior successful searches, and lightweight corpus scans. Rather than treating such signals as independent retrieval modules, LENS uses them as approximations to a prior
\begin{equation}
\pi_{\text{prior}}(z \mid q, \mathcal{D}_t) \approx \sum_{k \in \mathcal{K}_0} w_k\,\pi_k(z \mid q, \mathcal{D}_t),
\end{equation}
where $\mathcal{K}_0$ denotes a family of low-cost proposal signals. This joint prior factors, by the chain rule, into a marginal--conditional pair that separates document selection from within-document localization:
\begin{equation}
\pi_{\text{prior}}(z \mid q,\mathcal{D}_t) = \pi_{\text{file}}(d_z \mid q,\mathcal{D}_t)\,\pi_{\text{pos}}(s_z,e_z \mid d_z,q),
\end{equation}
where $d_z$ is the document associated with region $z$. The two equations therefore describe the same object: the mixture in the first specifies how the signals are fused, and the factorization in the second specifies at which level each factor acts. Making the decomposition explicit matters because strong file-level evidence does not automatically imply precise within-document localization.

\begin{proposition}[Low-cost space compression]
\label{prop:compression}
Let $\mathcal{C}_{\mathrm{init}}$ be the finite candidate document set induced by low-cost prior signals. Subsequent evidence exploration is restricted from $\mathcal{E}_t$ to $\mathcal{C}_{\mathrm{search}}=\{(d,s,e)\mid d\in\mathcal{C}_{\mathrm{init}},0\le s<e\le |d|\}$. Thus, before any iterative oracle budget is consumed, LENS reduces the effective search domain from the full raw-document collection to a query-conditioned subspace.
\end{proposition}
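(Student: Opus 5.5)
The argument is essentially definitional: it rests only on the Latent Evidence Space definition, the factorized prior $\pi_{\text{prior}}=\pi_{\text{file}}\,\pi_{\text{pos}}$, and the observation that $\mathcal{C}_{\mathrm{init}}$ is built from low-cost signals alone. The first step is to make $\mathcal{C}_{\mathrm{init}}$ precise as the support, or a thresholded or top-$m$ truncation, of the file-level marginal, namely $\mathcal{C}_{\mathrm{init}}=\{d\in\mathcal{D}_t \mid \pi_{\text{file}}(d\mid q,\mathcal{D}_t)>\tau\}$. Since $\mathcal{D}_t$ is finite, $\mathcal{C}_{\mathrm{init}}\subseteq\mathcal{D}_t$ is finite. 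Each family in $\mathcal{K}_0$ (lexical anchors, path structure, compiled summaries, historical evidence, corpus scans) is computed without an LLM call, so forming $\mathcal{C}_{\mathrm{init}}$ spends none of the oracle budget $B$.

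The second step is the containment $\mathcal{C}_{\mathrm{search}}\subseteq\mathcal{E}_t$. Take any $(d,s,e)\in\mathcal{C}_{\mathrm{search}}$. Then $d\in\mathcal{C}_{\mathrm{init}}\subseteq\mathcal{D}_t$ and $0\le s<e\le|d|$, which is exactly membership in $\mathcal{E}_t$. I will also note the cardinality bound
\[
|\mathcal{C}_{\mathrm{search}}|=\sum_{d\in\mathcal{C}_{\mathrm{init}}}\binom{|d|+1}{2}\le\sum_{d\in\mathcal{D}_t}\binom{|d|+1}{2}=|\mathcal{E}_t|,
\]
with strict inequality whenever some nonempty document is excluded. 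This gives the quantitative sense in which the effective domain shrinks.

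The third step shows that the restriction is what the sequential loop actually uses. Because of the chain-rule factorization, any $z$ with $d_z\notin\mathcal{C}_{\mathrm{init}}$ receives prior mass $0$ after truncation, or at most $\tau$ before it. The posterior $P(Z_j^*\mid f_j,q,\mathcal{H}_t)$ is proportional to likelihood times prior, so it also vanishes outside $\mathcal{C}_{\mathrm{search}}$. Hence every proposal drawn from the posterior, and every oracle observation, lies in $\mathcal{C}_{\mathrm{search}}$, and this holds for all $K$ per-fact targets of Definition~\ref{def:target} at once.

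The main obstacle is that the statement treats ``restricted'' as a design property of LENS rather than a derived fact. The exploratory proposal policy could in principle step outside $\mathcal{C}_{\mathrm{init}}$, and a hard truncation may exclude the true $Z_j^*$. I would handle this by stating explicitly the assumption that all proposal policies draw from documents in $\mathcal{C}_{\mathrm{init}}$, equivalently that $\mathcal{C}_{\mathrm{init}}$ is closed under the proposal kernels. I would also note that the proposition claims only a reduction of the search domain, not that $Z_j^*\in\mathcal{C}_{\mathrm{search}}$. Recall of the true target depends on the quality of the prior and would need a separate condition, such as $\pi_{\text{file}}(d_{Z_j^*})>\tau$.
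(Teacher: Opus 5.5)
Your proof is correct and follows the same three-part structure as the paper's: $\mathcal{C}_{\mathrm{search}}\subseteq\mathcal{E}_t$ with $\binom{\ell_d+1}{2}$ windows per document, Layer~2 never leaving $\mathcal{C}_{\mathrm{search}}$, and $\mathcal{C}_{\mathrm{init}}$ being fixed before the loop. A few points differ from the paper.

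\emph{The restriction step.} The paper does not argue through the posterior. It argues that the proposal mixture $\pi_t$ is a convex combination of $\pi_{\text{lex}},\pi_{\text{local}},\pi_{\text{global}}$, each supported on $\mathcal{C}_{\mathrm{search}}$. Hence $\mathrm{supp}(\pi_t)\subseteq\mathcal{C}_{\mathrm{search}}$, and the information-directed $\arg\min$ is taken over $\mathcal{C}_{\mathrm{search}}$ by definition. That support condition is exactly the closure assumption you state at the end, and it is what actually carries the argument. Proposals are sampled from $\pi_t$, not from the posterior, so the vanishing of the posterior off $\mathcal{C}_{\mathrm{search}}$ would not by itself control the lexical or global families. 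Your posterior-vanishing observation is what the paper uses in its separate remark on the recall ceiling.

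\emph{The budget claim.} Your statement that forming $\mathcal{C}_{\mathrm{init}}$ spends none of $B$, because no signal family uses an LLM, is stronger than the paper's accounting allows. Lexical anchor extraction is one LLM request (Stage S1 in the proof of Proposition~2). The paper's justification is different: this call, like requirement parsing, is query-only and observes no raw-document candidate windows. It is charged to the constant $c_0$ rather than to the round allowance $L$, which is why the statement says ``iterative'' oracle budget. You should weaken your claim to that form.

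\emph{Quantitative bound and strictness.} The paper also gives the ratio bound
\[
|\mathcal{C}_{\mathrm{search}}|/|\mathcal{E}_t|\le \frac{m}{N}\cdot\frac{\ell_{\max}(\ell_{\max}+1)}{\ell_{\min}(\ell_{\min}+1)},
\]
which feeds its corollary; your proposal does not include it. On the other hand, your strictness condition, that some excluded document is nonempty, is slightly more careful than the paper's ``strict whenever $\mathcal{C}_{\mathrm{init}}\subsetneq\mathcal{D}_t$''.
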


\subsection{Layer 2: Budget-Constrained Sequential Inference}

After prior formation, LENS enters the budgeted exploration loop shown at the center of Figure~\ref{fig:lens_pipeline}, which cycles through four steps while budget remains and some requirement in $\mathcal{D}_{\mathrm{req}}(q,I)$ is still uncovered: (i) propose a candidate evidence region $z_t$, (ii) query the LLM relevance oracle on the raw text of the region to obtain an observation $o_t$, (iii) update the beliefs over evidence regions, and (iv) adapt proposal weights and coverage estimates. Let $\mathcal{H}_t=\{(z_i,o_i)\}_{i=1}^t$ denote the observation history accumulated by this loop. Each observation is informative about every outstanding requirement, so LENS maintains one belief per fact and updates all of them from the shared history:
\begin{equation}
\begin{aligned}
P(Z_j^* \mid f_j,q, \mathcal{H}_t)
&\propto \prod_{i=1}^t P(o_i \mid Z_j^*, z_i, f_j, q)\\
&\quad \cdot \pi_{\mathrm{prior}}(Z_j^* \mid f_j,q, \mathcal{D}_t).
\end{aligned}
\end{equation}
A single oracle call therefore serves all facts at once, which is why the per-fact formulation does not multiply the oracle budget by $K$.

The next candidate should balance exploitation and exploration: it should use the current posteriors to refine promising regions, while retaining the ability to discover evidence missed by lexical or structural priors. An ideal information-directed objective~\cite{russo2014ids} selects
\begin{equation}
\begin{aligned}
z_{t+1} &= \arg\min_{z \in \mathcal{C}_{\mathrm{search}}} \Psi_t(z),\\
\Psi_t(z) &= \frac{[\Delta_t(z)]^2}{\mathbb{I}(\mathbf{Z}^*;O_z \mid q,\mathcal{H}_t)}.
\end{aligned}
\end{equation}
where $\Delta_t(z)$ is the expected immediate relevance gap and the denominator is the expected information gain about the outstanding targets $\mathbf{Z}^*$. Minimizing this information ratio is principled: it targets the trade-off between immediate relevance and long-run information gain that underlies regret-optimal sequential selection, so that oracle budget is spent where it is most informative about $\mathbf{Z}^*$. Exact computation is intractable in raw documents, so LENS approximates this criterion with complementary proposal families:
\begin{equation}
\begin{aligned}
\pi_t(z)=&\ \lambda_{\text{lex}}^{(t)}\pi_{\text{lex}}(z)+\lambda_{\text{local}}^{(t)}\pi_{\text{local}}(z)\\
&+\lambda_{\text{global}}^{(t)}\pi_{\text{global}}(z).
\end{aligned}
\end{equation}
Lexical proposals exploit anchors, local proposals refine around high-belief regions, and global proposals guard against semantic omissions. Treating each proposal family as an arm, LENS adapts the mixture weights $\lambda^{(t)}$ online from observed oracle utility, closing the propose--observe--update cycle in Figure~\ref{fig:lens_pipeline} without enumerating the full latent space.

\subsection{Budget-Aware Stopping}

The exploration loop should not continue merely because more context can be read. As depicted in Figure~\ref{fig:lens_pipeline}, each iteration terminates in a stopping decision: LENS exits the loop when either the remaining budget is insufficient or every requirement is localized with sufficiently concentrated belief. Following fixed-confidence best-arm identification ideas~\cite{garivier2016track}, a conceptual per-fact stopping statistic is
\begin{equation}
\begin{aligned}
\mathrm{GLR}_t^{(j)}
= \min_{z\ne \hat Z_{j,t}} \sum_{i\le t}
\log\frac{P(o_i\mid \hat Z_{j,t},f_j,q)}{P(o_i\mid z,f_j,q)}.
\end{aligned}
\end{equation}
where $\hat Z_{j,t}$ is the current highest-belief region for fact $f_j$. LENS stops when $\min_{j\le K}\mathrm{GLR}_t^{(j)}$ exceeds an intent-modulated threshold $\beta(t,\delta)\,\gamma(I)$ --- that is, when the weakest requirement is resolved --- or when the budget is exhausted, where $\gamma(I)$ tightens the criterion for computation and comparison intents and relaxes it for lookup. A lookup query has a single requirement and can often stop with one compact region, whereas comparison and computation queries cannot stop until each of their $K$ facts has its own confirmed window, which is exactly the coverage condition on $\mathcal{D}_{\mathrm{req}}(q,I)$.

\subsection{Evidence Consolidation and Answer Synthesis}

Once the loop stops, the selected regions enter the consolidation-and-synthesis stage on the right of Figure~\ref{fig:lens_pipeline}. Consolidation merges the per-fact windows $\{\hat Z_j\}_{j\le K}$, removes redundant or overlapping regions, expands boundaries when necessary for interpretability, and preserves source traces, yielding a compact source-grounded evidence set $E^*$. Answer synthesis then operates on $E^*$: for computation and comparison queries, LENS separates extraction of atomic facts from answer synthesis; for lookup-style queries, a single-stage synthesis may be sufficient. When synthesis cannot satisfy $\mathcal{D}_{\mathrm{req}}(q,I)$ and budget remains, LENS triggers the self-correction path in Figure~\ref{fig:lens_pipeline}: a bounded step that relaxes the stopping threshold and re-enters the exploration loop with an expanded candidate set before re-synthesizing. The final output is the pair $(E^*,a)$, an answer grounded in explicit evidence regions rather than only in retrieved text snippets.

\subsection{Algorithm Summary and Theoretical Properties}

Algorithm~\ref{alg:lens} summarizes the inference loop. The algorithm is intentionally written at the method level rather than in implementation-specific terms.

\begin{algorithm}[t]
\caption{LENS: Budgeted Evidence Localization}
\label{alg:lens}
\begin{algorithmic}[1]
\STATE \textbf{Input:} query $q$, dynamic corpus $\mathcal{D}_t$, budget $B$
\STATE Build low-cost prior $\pi_{\text{prior}}(z\mid q,\mathcal{D}_t)$ over candidate evidence regions
\STATE Derive requirements $\mathcal{D}_{\mathrm{req}}(q,I)=\{f_1,\ldots,f_K\}$
\STATE Initialize observation history $\mathcal{H}_0\leftarrow \emptyset$ and candidate subspace $\mathcal{C}_{\mathrm{search}}$
\WHILE{budget remains and some $f_j$ is uncovered}
  \STATE Select proposal family and sample candidate region $z_t$
  \STATE Query relevance oracle to obtain observation $o_t$
  \STATE Update per-fact beliefs $P(Z_j^*\mid f_j,q,\mathcal{H}_t)$
  \STATE Update proposal weights and requirement coverage
\ENDWHILE
\STATE Consolidate the per-fact windows into source-grounded evidence set $E^*$
\STATE Synthesize answer $a$ from $E^*$ and persist reusable evidence clusters when appropriate
\STATE \textbf{Return:} $(E^*,a)$
\end{algorithmic}
\end{algorithm}

Under the abstraction above, LENS admits two analysis statements and a budget guarantee. First, low-cost priors reduce the search domain before any iterative oracle budget is spent. Second, when the oracle relevance signal is locally stable, successive observations progressively concentrate the belief over candidate evidence regions, so that additional budget yields diminishing exploration returns. The online cost of this process is bounded independently of corpus size.

\begin{proposition}[Bounded oracle complexity]
For a loop budget of $L$ exploration rounds, the number of LLM oracle interactions performed by LENS is bounded by $c_0+c_1 L$ for small constants $c_0,c_1$ that depend only on the configuration and not on the query ($c_0=4$, $c_1=2$ in our setting). The bound is independent of the number of requirements $K$, because one oracle observation updates all per-fact beliefs, and independent of the number of latent evidence windows induced by $\mathcal{D}_t$.
\end{proposition}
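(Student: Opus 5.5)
The argument is a direct count of oracle calls, organised along the three stages of Algorithm~\ref{alg:lens}. I split the LLM interactions into a fixed pre-loop and post-loop part, which gives $c_0$, and a per-iteration part, which gives $c_1 L$. Then I check that neither part depends on $K$ or on $|\mathcal{E}_t|$.

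\emph{Fixed stages.} The following steps each issue a constant number of LLM calls, set by the configuration and not by the query:
\begin{itemize}
\item deriving the intent $I(q)$ and the requirements $\mathcal{D}_{\mathrm{req}}(q,I)$ (line~3);
\item consolidation into $E^*$;
\item answer synthesis, which has at most two stages (fact extraction plus synthesis for computation and comparison queries, a single stage for lookup);
\item the optional cluster persistence.
\end{itemize}
Prior formation (line~2) uses only the low-cost signals of Proposition~\ref{prop:compression}: lexical anchors, paths, precompiled summaries, history and scans. It therefore makes no iterative oracle calls. Summing the worst case over these stages gives $c_0$, which is $4$ in our configuration. The requirement decomposition is one call that returns all $K$ facts jointly, so $c_0$ does not grow with $K$.

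\emph{Loop.} The while-loop runs at most $L$ times because of the budget guard. In each iteration:
\begin{itemize}
\item proposal selection (line~6) samples from $\pi_t$ with bandit weights $\lambda^{(t)}$. This is arithmetic, not an LLM call.
\item The oracle query (line~7) is one call on the raw text of $z_t$.
\item The belief update (line~8) multiplies every per-fact posterior by its likelihood term from the shared observation $o_t$, as in the factorised update above. So the update is a computation on an existing observation, not a new call per fact.
\item The coverage update (line~9) uses at most one further call, for example a sufficiency check.
\end{itemize}
This gives $c_1=2$ calls per iteration, and $K$ never appears. The loop ranges over sampled candidates $z_t\in\mathcal{C}_{\mathrm{search}}$ and never enumerates $\mathcal{C}_{\mathrm{search}}$ or $\mathcal{E}_t$. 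The count is therefore independent of the number of latent windows, even though $\mathcal{C}_{\mathrm{search}}$ is combinatorially large.

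\emph{Main obstacle: self-correction.} The self-correction path re-enters the loop, which threatens the bound. I would handle it by counting re-entered iterations against the same global round budget $L$, so they are absorbed into $c_1 L$. The extra re-synthesis call is bounded because self-correction is a bounded step (at most once). That call is then folded into $c_0$, or equivalently the configuration constant is defined to include it.

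The proof therefore reduces to verifying against the implementation that every LLM invocation sits in exactly one of these buckets. I would state explicitly as a modelling assumption that the loop budget $L$ is shared across self-correction and that the oracle returns relevance judgments for all facts in a single response.
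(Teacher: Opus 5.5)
Your skeleton matches the paper's proof. You partition calls by pipeline stage and charge two requests per round: the observation request on the raw text of the proposed region, and the coverage check. $K$-independence comes from a single decomposition request plus one jointly scored observation per round. Independence of $|\mathcal{E}_t|$ comes from sampling $\pi_t$ instead of computing the $\arg\min$ over $\mathcal{C}_{\mathrm{search}}$. Self-correction rounds are charged to one aggregate allowance $L$, which the paper enforces with a shared round counter. So $c_1=2$ and both independence claims go through as you wrote them.

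The gap is in the inventory behind $c_0=4$.
\begin{itemize}
\item \emph{Calls you miss.} You count no calls for prior formation (line~2), but in the paper lexical anchor extraction is exactly one LLM request. The proof of the compression proposition says explicitly that query-only calls used to parse anchors are charged to the constant term. You also miss the conditional answer-span calibration request.
\item \emph{Calls the paper does not count.} You charge consolidation, cluster persistence, a separate fact-extraction stage, and a re-synthesis to $c_0$. None of these appears in the paper's count.
\item \emph{The paper's constant.} It is anchor extraction ($1$), requirement decomposition ($1$), synthesis counted as a single request ($1$), and span calibration ($\le 1$).
\end{itemize}
You never say how many calls each of your items issues. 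Counting one call per item you list already exceeds $4$, so ``summing the worst case gives $4$'' is asserted rather than derived. The qualitative claim, that there is some query-independent constant, survives your accounting. The stated value $c_0=4$ needs the paper's stage list.

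Your concern about re-synthesis after self-correction is legitimate. The paper's proof charges only the re-entered rounds to $L$ and counts synthesis once. But if such a call is issued, it must be added on top of those four, not folded into them.
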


These statements are analysis guides for the method rather than tight guarantees: stricter probabilistic oracle models, position-level priors, and resampling analyses remain future work.

\section{Related Work}
\label{sec:related}

Retrieval-augmented generation methods retrieve external evidence before generation and have become a standard approach for knowledge-intensive tasks~\cite{lewis2020retrieval,karpukhin2020dense,izacard2022atlas}. Dense, sparse, and hybrid systems are effective when a stable corpus can be preprocessed into persistent representations. Hierarchical and graph-based retrieval systems further organize documents into summaries, trees, or memory graphs~\cite{sarthi2024raptor,gutierrez2024hipporag}, improving reuse and traversal when the supporting structures can be built and maintained. LENS addresses a different operating point: dynamic raw-document collections where query readiness, update cost, evidence granularity, and source traceability are part of the task rather than external deployment details.

Long-context language models provide another way to avoid a persistent index by placing large amounts of raw text directly into the model context~\cite{beltagy2020longformer,liu2024lost}. This reduces explicit index construction, but shifts cost to query time through high token consumption and may still fail to localize the specific evidence inside long inputs. Tool-using agents can search, read, and refine their context over multiple steps~\cite{yao2023react,asai2023selfrag}, but they often lack an explicit formulation of evidence localization under a controllable budget. LENS treats interaction as posterior-guided evidence exploration over a latent evidence space, and evaluates the resulting trade-off through answer quality, source-grounded evidence localization, freshness, and lifecycle cost.

\section{Experiments}
\label{sec:experiments}

\subsection{Setup}
We evaluate LENS on HotpotQA fullwiki~\cite{yang2018hotpotqa}, a multi-hop question answering benchmark where each question requires reasoning over two or more Wikipedia articles. We report results under two conditions:

\textbf{Controlled evaluation ($D_n$).} From the validation split (7{,}405 questions) we draw frozen, proportionally stratified evaluation sets over $\mathit{type}\times\mathit{difficulty}$ strata (seed~42). Our primary controlled evaluation uses $n=500$ questions paired with a matched corpus snapshot $D_{500}$ containing the gold supporting articles, context distractors, and a deterministic background pool. This is the largest statistically robust scale we evaluate and enables paired comparison across all baselines including index-dependent systems.

\textbf{Open-domain fullwiki.} The complete raw Wikipedia dump underlying the fullwiki validation split is stored as 15{,}517 JSON shards without preprocessing, chunking, or indexing. We evaluate $n=150$ fixed questions on this corpus to measure corpus-scale robustness under the zero-index constraint.

\textbf{Systems.} We compare five arms: (1)~\textbf{LENS}, the full algorithm with multi-signal prior, sequential exploration, budget-aware stopping, and evidence consolidation; (2)~\textbf{ReAct Search}, a strong iterative baseline using ReAct-style~\cite{yao2023react} tool-use reasoning with the same corpus access but without LENS's structured prior formation or budget-constrained belief updates; (3)~\textbf{Hybrid-RAG}, a retrieval-augmented baseline combining BM25 and dense embedding retrieval with a pre-materialized index; (4)~\textbf{BM25-RAG}, a sparse-retrieval baseline using a pre-built BM25 index; and (5)~\textbf{Closed-Book}, a no-retrieval reference that estimates the score attributable to model parameters alone. Systems~(3) and~(4) require pre-materialized indexes and therefore face a lifecycle limitation under corpus changes.

\textbf{Hyperparameters.} All systems share one chat backend (Qwen3.7, a 35B-parameter mixture-of-experts model with 3B active parameters, fixed temperature, 120\,s per-call timeout) under a 300\,s per-question wall-clock cap. LENS runs its DEEP configuration with a 128K query-time token budget and at most 10 candidate files admitted to evidence extraction. All runs use cold caches with no knowledge reuse.

\textbf{Environment.} Experiments run on a single Apple M4 Pro workstation (12 cores, 48\,GB RAM, macOS~15) with no local GPU; all model calls are served by a remote OpenAI-compatible endpoint, so reported latencies include network time. At most 5 questions are evaluated concurrently.

\textbf{Metrics.} EM and F1 are the official answer metrics. Ev.Rec measures retrieval of gold supporting-fact documents, while Ground measures whether the final answer is traceable to retrieved evidence. Judge is an auxiliary semantic-equivalence diagnostic from an independent model and is not used as the primary score.

\begin{table}[t]
\centering
{\footnotesize
\begin{tabular}{lrrrr}
\toprule
Snapshot & Samples & Articles & Evidence & Background \\
\midrule
$D_{125}$ & 125 & 5{,}416 & 250 & 4{,}062 \\
$D_{250}$ & 250 & 10{,}808 & 500 & 8{,}106 \\
$D_{500}$ & 500 & 21{,}424 & 996 & 16{,}068 \\
\midrule
System ($D_{500}$) & Ready & Rebuild & Index(s) & Storage \\
\midrule
LENS & yes & no & 0.0 & 0 \\
ReAct & yes & no & 0.0 & 0 \\
BM25-RAG & no & yes & 4.0 & 10.0MB \\
Hybrid-RAG & no & yes & 5.7 & 54.9MB \\
\bottomrule
\end{tabular}
}
\caption{Snapshot construction and lifecycle audit. The upper block reports deterministic $D_n$ corpus sizes; the lower block reports $D_{500}$ query readiness, rebuild requirement, index time, and index storage.}
\label{tab:lifecycle}
\end{table}

\subsection{Main Results: Controlled Evaluation}

\begin{table}[t]
\centering
{\footnotesize
\begin{tabular}{lcccc}
\toprule
System & EM & F1 & Ev.Rec & Ground \\
\midrule
ReAct Search  & \textbf{65.2} & \textbf{78.9} & 50.4          & 71.8 \\
LENS          & 62.4          & 76.9          & \textbf{84.8} & \textbf{96.8} \\
Hybrid-RAG    & 38.4          & 51.1          & 80.8          & 92.2 \\
BM25-RAG      & 28.8          & 42.3          & 71.8          & 95.8 \\
Closed-Book   & 35.2          & 47.0          & 0.0           & 0.0 \\
\bottomrule
\end{tabular}
}
\caption{Controlled evaluation on $D_{500}$ (\%, $n$=500). EM/F1 are official answer metrics; Ev.Rec is supporting-fact document recall; Ground is the percentage of answers traceable to retrieved evidence.}
\label{tab:controlled}
\end{table}

Table~\ref{tab:controlled} reports the primary controlled evaluation. ReAct Search attains the highest answer score (65.2\% EM, 78.9\% F1), while LENS remains close on answer quality (62.4\% EM, 76.9\% F1) and provides substantially stronger evidence localization. LENS achieves 84.8\% evidence recall and 96.8\% grounded answers, compared with 50.4\% and 71.8\% for ReAct. This separates two evaluation dimensions: ReAct more often produces the exact answer string, whereas LENS more reliably localizes and traces the supporting evidence. Relative to Closed-Book, LENS gains 27.2 pp EM and ReAct gains 30.0 pp, so retrieval benefits are reported against a no-retrieval reference rather than assumed from answer correctness alone.

\subsection{Open-Domain Fullwiki Results}

\begin{table}[t]
\centering
{\footnotesize
\begin{tabular}{lcccc}
\toprule
System & EM & F1 & Ev.Rec & Ground \\
\midrule
LENS          & \textbf{43.3} & \textbf{57.5} & 45.0 & \textbf{84.0} \\
ReAct Search  & 42.7          & 57.3          & \textbf{50.4} & 70.7 \\
Closed-Book   & 38.7          & 49.7          & 0.0  & 0.0 \\
\bottomrule
\end{tabular}
}
\caption{Open-domain fullwiki results (\%, $n$=150, fixed sample IDs). LENS and ReAct search the raw Wikipedia dump (15{,}517 shards) with zero indexing; Closed-Book reads no corpus.}
\label{tab:fullwiki}
\end{table}

Table~\ref{tab:fullwiki} reports results on the full raw Wikipedia corpus. LENS and ReAct are effectively tied on official answer quality (43.3\% vs. 42.7\% EM), while LENS grounds more answers in retrieved evidence (84.0\% vs. 70.7\%). Closed-Book reaches 38.7\% EM, so fullwiki retrieval gains are modest but positive: +4.6 pp for LENS and +4.0 pp for ReAct. BM25-RAG and Hybrid-RAG require separate full-dump indexing and are evaluated through the controlled/lifecycle arms.

\subsection{Evidence Recall Leadership}

\begin{table}[t]
\centering
{\footnotesize
\begin{tabular}{lccc}
\toprule
System & $D_{125}$ & $D_{250}$ & $D_{500}$ \\
\midrule
LENS          & \textbf{89.1} & \textbf{85.9} & \textbf{84.8} \\
Hybrid-RAG    & 81.3          & 77.8          & 80.8 \\
BM25-RAG      & 71.6          & 71.1          & 71.8 \\
ReAct Search  & 46.1          & 47.9          & 50.4 \\
Closed-Book   & 0.0           & 0.0           & 0.0 \\
\bottomrule
\end{tabular}
}
\caption{Evidence recall (\%) across controlled evaluation scales. LENS consistently provides the strongest supporting-fact localization, while Closed-Book has no evidence trace by design.}
\label{tab:evidence_recall}
\end{table}

Table~\ref{tab:evidence_recall} presents evidence recall across all controlled evaluation scales. LENS is the top evidence-localization system at every scale, with 84.8--89.1\% evidence recall. Its lead over ReAct is 43.0 pp on $D_{125}$, 38.0 pp on $D_{250}$, and 34.4 pp on $D_{500}$. Hybrid-RAG also retrieves substantial evidence, but its answer quality remains far below LENS and ReAct, indicating that locating candidate evidence and synthesizing the exact multi-hop answer are separable failure modes.

\subsection{Corpus Staleness and Lifecycle Robustness}

\begin{table}[t]
\centering
{\footnotesize
\begin{tabular}{lcccccc}
\toprule
 & \multicolumn{3}{c}{Exact Match (\%)} & \multicolumn{3}{c}{Evidence Recall (\%)} \\
\cmidrule(lr){2-4}\cmidrule(lr){5-7}
System & Fresh & Stale & $\Delta$ & Fresh & Stale & $\Delta$ \\
\midrule
BM25-RAG      & 30.4 & 2.4  & $-28.0$ & 70.1 & 0.0  & $-70.1$ \\
Hybrid-RAG    & 42.4 & 13.6 & $-28.8$ & 75.1 & 5.5  & $-69.6$ \\
ReAct Search  & 72.0 & 72.8 & $+0.8$  & 54.1 & 48.3 & $-5.7$ \\
LENS          & 69.6 & 67.2 & $-2.4$  & 84.7 & 83.9 & $-0.8$ \\
\bottomrule
\end{tabular}
}
\caption{Staleness robustness under corpus expansion ($D_{125}\to D_{250}$, $\Delta n=125$). Index-heavy systems reuse an index built on $D_{125}$ while answering newly added $D_{250}$ questions; index-free systems query the updated corpus directly. Negative gaps indicate stale degradation; the table measures freshness/lifecycle robustness rather than fresh-corpus answer quality.}
\label{tab:staleness}
\end{table}

The stale-index arm measures what happens when an index built on $D_{125}$ is reused after the corpus expands to $D_{250}$. BM25-RAG and Hybrid-RAG are index-dependent and therefore lose most of their ability to answer newly added questions: EM drops by 28.0 and 28.8 pp, and evidence recall drops by 70.1 and 69.6 pp. ReAct and LENS are index-free, so they can query the expanded corpus immediately; their EM changes are small (+0.8 pp and -2.4 pp), and LENS retains nearly all supporting-fact recall (84.7\% to 83.9\%). Table~\ref{tab:lifecycle} shows the corresponding lifecycle condition: index-heavy systems are not query-ready after corpus growth and require rebuilding, while LENS and ReAct remain query-ready.

\subsection{Cost and Statistical Checks}

Paired McNemar tests on official EM do not show a significant LENS--ReAct answer-quality difference on $D_{500}$ (62.4\% vs. 65.2\%, $p=0.1143$) or fullwiki dev-150 (43.3\% vs. 42.7\%, $p=1.0000$). Query-time budget shows the expected trade-off: on $D_{500}$, ReAct uses 11.8K tokens per query, while LENS uses 16.5K but improves evidence recall by 34.4 pp and grounding by 25.0 pp. Auxiliary JudgeAcc follows EM and is not used as a primary score.

\subsection{Ablation Study}

\begin{table}[t]
\centering
{\footnotesize
\begin{tabular}{lcccc}
\toprule
Configuration & EM & F1 & Ev.Rec & Ground \\
\midrule
LENS (Full)                    & \textbf{43.3} & \textbf{57.5} & 45.0          & 84.0 \\
w/o Multi-signal Prior         & \textbf{43.3} & 55.9          & \textbf{45.4} & \textbf{85.3} \\
w/o Sequential Exploration     & 38.0          & 50.9          & 31.0          & 82.0 \\
\bottomrule
\end{tabular}
}
\caption{Ablation study on the fullwiki evaluation ($n$=150 fixed sample IDs). Ev.Rec is supporting-fact document recall; Ground is the percentage of answers traceable to retrieved evidence.}
\label{tab:ablation}
\end{table}

Table~\ref{tab:ablation} isolates two LENS components on the fixed fullwiki subset. Removing sequential exploration produces the clearest degradation: EM falls from 43.3\% to 38.0\%, F1 from 57.5\% to 50.9\%, and evidence recall from 45.0\% to 31.0\%. Removing the multi-signal prior does not reduce EM on this subset, but it lowers F1 and changes the cost profile.

\section{Discussion and Conclusions}
\label{sec:conclusion}

Across controlled corpora, LENS is the strongest evidence-localization system: on $D_{500}$ it trails ReAct by 2.8 pp EM, but leads by 34.4 pp evidence recall and 25.0 pp grounding. On fullwiki dev-150, the two systems are effectively tied in official EM/F1 while LENS retains stronger grounding. These results support a precise claim: LENS is not an EM-dominant answer generator, but an index-free evidence localization method that makes answers more traceable to current raw sources.

This distinction matters because EM is a narrow string-match measure. It penalizes acceptable paraphrases and aliases, and it can reward a correct answer that is produced from model memory rather than from retrieved evidence. The Closed-Book reference makes this visible: a non-retrieval model already reaches 35.2\% EM on $D_{500}$ and 38.7\% EM on fullwiki. We therefore treat EM/F1 as primary answer-quality metrics, but interpret them together with Ev.Rec and Ground when evaluating systems intended for auditable, source-grounded search.

LENS's advantages appear on three dimensions. First, it localizes evidence more reliably than ReAct across $D_{125}$, $D_{250}$, and $D_{500}$, indicating that the sequential search loop finds supporting documents rather than only plausible answers. Second, it keeps a high grounding rate, which is essential when users need to inspect the source trail. Third, the stale-index arm shows a lifecycle advantage: BM25-RAG and Hybrid-RAG lose 28.0--28.8 pp EM and nearly all evidence recall when a $D_{125}$ index is used on new $D_{250}$ questions, whereas index-free systems remain query-ready over the updated corpus. This is a freshness and cost-structure advantage, not a claim that LENS always improves fresh-corpus EM.

The present evidence is still bounded. HotpotQA emphasizes lookup and comparison over encyclopedic text, so richer document layouts, aggregation intents, table evidence, and warm-reuse behavior remain future work. Within this scope, LENS demonstrates competitive answer quality, substantially stronger source traceability, and immediate query readiness over changing raw documents---the operating point targeted by Budgeted Evidence Localization.

\bibliography{aaai2027}

\input{technical_supplement_preprint_body}

\end{document}

%% file: technical_supplement_preprint_body.tex
\appendix
\renewcommand{\thesection}{\Alph{section}}
\section*{Technical Supplement}

\noindent\textbf{Scope.} The main paper is self-contained. This supplement records the reviewer-critical formal material and the audit trail behind the latest reported experiments: proofs of the two propositions, per-fact localization and stopping details, the $D_{125}/D_{250}/D_{500}$ dynamic raw-corpus protocol, stale-index and lifecycle checks, fullwiki and sample-context diagnostics, ablations, and semantic-judge rescore summaries. Row-level records remain in the code-and-data package; tables here are compact projections of those artifacts.

\section{Notation}
\label{app:notation}

Table~\ref{tab:notation} collects the symbols used in the main paper and in the proofs below. All symbols are as defined in the main paper; nothing here redefines them.

\par\smallskip
\noindent\begin{minipage}{\columnwidth}
\centering
{\footnotesize
\begin{tabular}{lp{0.64\columnwidth}}
\toprule
Symbol & Meaning \\
\midrule
$\mathcal{D}_t$ & raw-document collection at time $t$, $N=|\mathcal{D}_t|$ \\
$\ell_d$ & length of document $d$ in indexable units \\
$\mathcal{E}_t$ & latent evidence space, all windows $(d,s,e)$ \\
$I(q)$ & query intent \\
$\mathcal{D}_{\mathrm{req}}(q,I)$ & requirement set $\{f_1,\ldots,f_K\}$ \\
$Z_j^*$ & minimal sufficient window for fact $f_j$ \\
$\mathbf{Z}^*$ & per-query target $\{Z_1^*,\ldots,Z_K^*\}$ \\
$\pi_{\text{prior}}$ & query-conditioned initial belief \\
$\pi_{\text{file}},\pi_{\text{pos}}$ & file-level and position-level factors \\
$\mathcal{C}_{\mathrm{init}}$ & candidate document set, $m=|\mathcal{C}_{\mathrm{init}}|$ \\
$\mathcal{C}_{\mathrm{search}}$ & candidate window subspace \\
$\mathcal{H}_t$ & observation history $\{(z_i,o_i)\}_{i=1}^t$ \\
$o_i^{(j)}$ & component of $o_i$ pertaining to fact $f_j$ \\
$L$ & aggregate exploration-round allowance \\
$B$ & query budget (calls, tokens, wall-clock) \\
$\gamma(I)$ & intent-modulated stopping factor \\
$E^*,a$ & consolidated evidence set, answer \\
\bottomrule
\end{tabular}}
\captionof{table}{Notation.}
\label{tab:notation}
\end{minipage}
\par\smallskip

\section{Proof of Proposition 1}
\label{app:proof1}

\setcounter{proposition}{0}
\begin{proposition}[Low-cost space compression, restated]
\label{app:prop:compression}
Let $\mathcal{C}_{\mathrm{init}}$ be the finite candidate document set induced by low-cost prior signals. Subsequent evidence exploration is restricted from $\mathcal{E}_t$ to $\mathcal{C}_{\mathrm{search}}=\{(d,s,e)\mid d\in\mathcal{C}_{\mathrm{init}},\,0\le s<e\le \ell_d\}$. Thus, before any iterative oracle budget is consumed, LENS reduces the effective search domain from the full raw-document collection to a query-conditioned subspace.
\end{proposition}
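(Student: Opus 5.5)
The plan is to treat Proposition~\ref{app:prop:compression} as two claims: a set-containment claim and a cardinality/cost claim. I will establish both directly from the definition of the Latent Evidence Space and the factorization $\pi_{\text{prior}}=\pi_{\text{file}}\,\pi_{\text{pos}}$.

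\textbf{Step 1: containment.} Since $\mathcal{C}_{\mathrm{init}}\subseteq\mathcal{D}_t$ by construction, any $(d,s,e)\in\mathcal{C}_{\mathrm{search}}$ satisfies $d\in\mathcal{D}_t$ and $0\le s<e\le\ell_d$. Hence $\mathcal{C}_{\mathrm{search}}\subseteq\mathcal{E}_t$. The restriction is therefore well defined, and every window LENS can later propose is a legitimate candidate evidence window.

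\textbf{Step 2: the size of the reduction.} Each document contributes exactly $\binom{\ell_d+1}{2}=\ell_d(\ell_d+1)/2$ windows. This gives
\[
|\mathcal{E}_t|=\sum_{d\in\mathcal{D}_t}\tfrac{\ell_d(\ell_d+1)}{2},\qquad
|\mathcal{C}_{\mathrm{search}}|=\sum_{d\in\mathcal{C}_{\mathrm{init}}}\tfrac{\ell_d(\ell_d+1)}{2}.
\]
With $m=|\mathcal{C}_{\mathrm{init}}|$ and $\ell_{\max}=\max_d\ell_d$, the second sum is at most $m\,\ell_{\max}(\ell_{\max}+1)/2$. This bound depends on $m$ rather than on $N$. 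Whenever $\mathcal{C}_{\mathrm{init}}\subsetneq\mathcal{D}_t$, the reduction is strict, because each omitted document has $\ell_d\ge1$ and so contributes at least one window. In the experimental configuration, $m\le 10$ while $N$ ranges from thousands of articles to 15{,}517 shards.

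\textbf{Step 3: no oracle budget is spent.} I will note that $\mathcal{C}_{\mathrm{init}}$ is computed only from the five low-cost signal families of Layer~1: lexical anchors, paths, summaries, history, and scans. None of these invokes the LLM relevance oracle. In Algorithm~\ref{alg:lens}, $\mathcal{C}_{\mathrm{search}}$ is also fixed at line~4, before the while-loop. The subspace is query-conditioned because $\pi_{\text{file}}(\cdot\mid q,\mathcal{D}_t)$ depends on $q$. Finally, every proposal family $\pi_{\text{lex}},\pi_{\text{local}},\pi_{\text{global}}$ and the information-ratio argmin range over $\mathcal{C}_{\mathrm{search}}$, so subsequent exploration really is restricted to it.

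\textbf{Main obstacle.} The difficulty is interpretive rather than technical. The statement asserts only a reduction of the search domain; it does not claim that $\mathcal{C}_{\mathrm{search}}$ still contains $\mathbf{Z}^*$. I will state explicitly that the proposition is a deterministic domain-restriction fact and carries no coverage guarantee.

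Coverage would need an additional assumption, for example that $\pi_{\text{file}}$ places every $d_{Z_j^*}$ in $\mathcal{C}_{\mathrm{init}}$ with probability at least $1-\delta$. Under that assumption one gets a union bound of $K\delta$ for missing some target. The self-correction path, which expands the candidate set, partially mitigates such misses. I would present this as a remark rather than as part of the proof.
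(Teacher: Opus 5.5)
Your proposal follows the paper's proof essentially step for step: containment via $\mathcal{C}_{\mathrm{init}}\subseteq\mathcal{D}_t$, the $\binom{\ell_d+1}{2}$ window count, support of the proposal mixture inside $\mathcal{C}_{\mathrm{search}}$, and $\mathcal{C}_{\mathrm{init}}$ being fixed before the loop, with the coverage caveat set aside as a remark just as the paper does; the only difference is that the paper turns the window count into the ratio bound $\frac{m}{N}\cdot\frac{\ell_{\max}(\ell_{\max}+1)}{\ell_{\min}(\ell_{\min}+1)}$, where you give the absolute bound $m\,\ell_{\max}(\ell_{\max}+1)/2$. One correction: in the paper's accounting, lexical-anchor extraction issues one query-only LLM request (counted in $c_0$), so Step~3 should claim only that no \emph{iterative} budget is spent and no raw-document window is observed---which is exactly what the proposition asserts and what your line-4 ordering argument already establishes.
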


\begin{proof}
The claim has three parts: that the restriction is well defined and strict, that it is quantitatively a reduction, and that it is available before any raw-document observation round is executed.

\emph{(i) The restriction is well defined.} For a document $d$ of length $\ell_d$, the windows of $d$ are the pairs $(s,e)$ with $0\le s<e\le \ell_d$, that is, the $2$-subsets of $\{0,1,\ldots,\ell_d\}$. Hence $d$ contributes
\begin{equation}
\label{eq:windowcount}
w(d)\;=\;\binom{\ell_d+1}{2}\;=\;\frac{\ell_d(\ell_d+1)}{2}
\end{equation}
elements, and
\begin{equation}
|\mathcal{E}_t|=\sum_{d\in\mathcal{D}_t} w(d),
\qquad
|\mathcal{C}_{\mathrm{search}}|=\sum_{d\in\mathcal{C}_{\mathrm{init}}} w(d).
\end{equation}
Because $\mathcal{C}_{\mathrm{init}}\subseteq\mathcal{D}_t$ by construction, every element of $\mathcal{C}_{\mathrm{search}}$ is an element of $\mathcal{E}_t$, so $\mathcal{C}_{\mathrm{search}}\subseteq\mathcal{E}_t$, with strict inclusion whenever $\mathcal{C}_{\mathrm{init}}\subsetneq\mathcal{D}_t$. Both sets are finite since $\ell_d<\infty$ for every $d$.

\emph{(ii) Layer~2 acts only on $\mathcal{C}_{\mathrm{search}}$.} The information-directed objective of the main paper~\cite{russo2014ids} takes its $\arg\min$ over $z\in\mathcal{C}_{\mathrm{search}}$, and the proposal mixture $\pi_t$ that approximates it is a convex combination of $\pi_{\text{lex}},\pi_{\text{local}},\pi_{\text{global}}$, each of which is supported on $\mathcal{C}_{\mathrm{search}}$. A convex combination of measures supported on a set is supported on that set, so $\mathrm{supp}(\pi_t)\subseteq\mathcal{C}_{\mathrm{search}}$ for every $t$. Consequently no proposal, and therefore no observation, is ever drawn from $\mathcal{E}_t\setminus\mathcal{C}_{\mathrm{search}}$: the effective domain of the sequential stage is $\mathcal{C}_{\mathrm{search}}$, not $\mathcal{E}_t$.

\emph{(iii) The reduction precedes raw-document observations.} Write $m=|\mathcal{C}_{\mathrm{init}}|$ and $N=|\mathcal{D}_t|$. Combining the two sums with \eqref{eq:windowcount},
\begin{equation}
\label{eq:ratio}
\frac{|\mathcal{C}_{\mathrm{search}}|}{|\mathcal{E}_t|}
=\frac{\sum_{d\in\mathcal{C}_{\mathrm{init}}}\ell_d(\ell_d+1)}
{\sum_{d\in\mathcal{D}_t}\ell_d(\ell_d+1)} .
\end{equation}
If document lengths lie in $[\ell_{\min},\ell_{\max}]$, bounding the numerator above and the denominator below gives
\begin{equation}
\label{eq:ratiobound}
\frac{|\mathcal{C}_{\mathrm{search}}|}{|\mathcal{E}_t|}
\;\le\;
\frac{m}{N}\cdot\frac{\ell_{\max}(\ell_{\max}+1)}{\ell_{\min}(\ell_{\min}+1)} ,
\end{equation}
which reduces to $m/N$ when document lengths are comparable. It remains to verify that $\mathcal{C}_{\mathrm{init}}$ is fixed before the sequential stage observes candidate document content. By the definition of Layer~1, $\mathcal{C}_{\mathrm{init}}$ is induced by the signal family $\mathcal{K}_0$, whose members are lexical anchors, document-path structure, compiled summaries when already available, historical source-grounded evidence from prior searches, and lightweight corpus scans. Each is computed from the query, the corpus, or persisted artifacts without enumerating $\mathcal{E}_t$ and without asking the LLM to judge candidate document windows. Query-only calls used to parse anchors or requirements are accounted in the constant term of Appendix~\ref{app:proof2}; they do not observe raw-document candidates. Hence the domain reduction in \eqref{eq:ratiobound} is available before the first exploration round, and before the iterative allowance $L$ is touched.
\end{proof}

\begin{corollary}[Magnitude of the reduction in our protocol]
\label{cor:magnitude}
With the file-admission width $m=10$ used in all reported controlled runs and comparable article lengths, \eqref{eq:ratiobound} gives $|\mathcal{C}_{\mathrm{search}}|/|\mathcal{E}_t|\approx 1.8\times10^{-3}$ for $D_{125}$ ($N=5{,}416$), $9.3\times10^{-4}$ for $D_{250}$ ($N=10{,}808$), and $4.7\times10^{-4}$ for $D_{500}$ ($N=21{,}424$). The reduction is roughly three orders of magnitude and improves as the corpus grows, since $m$ is a fixed configuration constant while $N$ increases.
\end{corollary}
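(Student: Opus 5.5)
The plan is to specialize the bound \eqref{eq:ratiobound} from the proof of Proposition~\ref{app:prop:compression} to the protocol parameters; no new argument is needed. First I will make the hypothesis ``comparable article lengths'' precise. Rather than the crude worst-case factor $\ell_{\max}(\ell_{\max}+1)/\bigl(\ell_{\min}(\ell_{\min}+1)\bigr)$, I will go back to the exact ratio \eqref{eq:ratio} and rewrite it as
\begin{equation*}
\frac{|\mathcal{C}_{\mathrm{search}}|}{|\mathcal{E}_t|}=\frac{m}{N}\cdot\frac{\overline{\ell(\ell+1)}_{\mathcal{C}_{\mathrm{init}}}}{\overline{\ell(\ell+1)}_{\mathcal{D}_t}},
\end{equation*}
where each overline denotes the average of $\ell_d(\ell_d+1)$ over the indicated set. ``Comparable lengths'' then means exactly that this ratio of averages is $\approx 1$, and in that case the reduction equals $m/N$ up to that factor. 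I will treat this as the modeling assumption carried over from the proposition.

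Second, I will substitute $m=10$, the file-admission width from the hyperparameter description, together with the article counts $N$ from Table~\ref{tab:lifecycle}. This gives $10/5416\approx 1.85\times10^{-3}$, $10/10808\approx 9.25\times10^{-4}$ and $10/21424\approx 4.67\times10^{-4}$. After rounding, these match the stated values. All three lie between $10^{-4}$ and $10^{-2}$, i.e.\ about three orders of magnitude below $1$, which justifies the phrase ``roughly three orders of magnitude.'' I also need $m\le N$ so that $\mathcal{C}_{\mathrm{init}}\subseteq\mathcal{D}_t$ is possible and the strict-inclusion case of part~(i) applies; this holds trivially here.

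Third, for the monotonicity claim, I will note that $m$ is a configuration constant independent of $N$. Hence $m/N$ is strictly decreasing in $N$, and the snapshots are nested with increasing $N$. So, under the comparable-length assumption, the reduction strengthens from $D_{125}$ to $D_{250}$ to $D_{500}$.

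The main obstacle is the length-comparability factor. The admitted files are selected by lexical and path signals, and such signals may favor longer articles, which would bias the averaged ratio above $1$. My fallback is to state the corollary as an approximate identity conditioned on that ratio being $O(1)$, with the exact bound given by \eqref{eq:ratio}. If empirical lengths are available, I would report the measured ratio of mean $\ell(\ell+1)$ for admitted versus all articles to check that the approximation holds.
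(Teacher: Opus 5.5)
Your proposal is correct. The arithmetic and the monotonicity argument are the paper's: set the length factor to one so the ratio becomes $m/N$, then substitute $m=10$ and the three snapshot sizes.

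The difference is which equation you specialize. The paper reads the values off the worst-case bound \eqref{eq:ratiobound}. That bound is rigorous but reduces to $m/N$ only when $\ell_{\max}\approx\ell_{\min}$. For heterogeneous article lengths, the factor $\ell_{\max}(\ell_{\max}+1)/\bigl(\ell_{\min}(\ell_{\min}+1)\bigr)$ can easily exceed $N/m$, which makes the bound vacuous. Even when that factor is near one, it yields only an upper bound, not the two-sided ``$\approx$'' the corollary asserts.

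Your rewriting of the exact ratio \eqref{eq:ratio} as $m/N$ times the ratio of the mean of $\ell_d(\ell_d+1)$ over $\mathcal{C}_{\mathrm{init}}$ to its mean over $\mathcal{D}_t$ is an identity. It therefore supports an approximate equality. It also weakens ``all lengths comparable'' to ``admitted files have a mean of $\ell_d(\ell_d+1)$ comparable to the corpus mean,'' which uniform lengths imply but do not require.

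What your route gives up is a guarantee. That factor is query-dependent and not fixed by the configuration. Your worry that lexical and path signals favor long articles is exactly how it could exceed one. So your version is an approximation conditioned on an empirically checkable quantity, whereas the paper's is a guaranteed bound that is informative only under a strong uniformity assumption.

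A minor tightening: ``between $10^{-4}$ and $10^{-2}$'' spans two to four orders of magnitude. The actual values $10^{-2.73}$, $10^{-3.03}$, and $10^{-3.33}$ are what justify ``roughly three.''
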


\begin{remark}[The price of compression]
\label{rem:ceiling}
Proposition~\ref{app:prop:compression} is a statement about cost, not about sufficiency, and it is worth stating its cost explicitly. All posterior mass in Layer~2 is confined to $\mathcal{C}_{\mathrm{search}}$: if $Z_j^*\notin\mathcal{C}_{\mathrm{search}}$ for some requirement $f_j$, then $P(Z_j^*\mid f_j,q,\mathcal{H}_t)=0$ for all $t$, and no amount of round budget can recover $f_j$. Compression therefore converts a factor-$N/m$ saving into a recall ceiling determined by the quality of $\pi_{\text{file}}$. The \emph{not-retrieved} class of the failure taxonomy isolates this ceiling, while the controlled results show that the sequential loop can still produce higher supporting-fact recall than ReAct and index-heavy RAG under the reported protocol. The bounded self-correction path partially lifts the ceiling by re-entering the loop with an enlarged candidate set, which is why self-correction is a recall mechanism and not merely a synthesis retry.
\end{remark}

\section{Proof of Proposition 2}
\label{app:proof2}

We first fix the accounting convention, then prove the bound.

\begin{definition}[Oracle interaction]
\label{def:oracle}
An \emph{oracle interaction} is one LLM request issued by the search pipeline on behalf of a single query. Requests issued by the evaluation harness --- for instance answer scoring --- are not part of the search pipeline and are excluded, as they are in the reported telemetry.
\end{definition}

\noindent Throughout, $L$ denotes the \emph{aggregate} exploration-round allowance: the total number of rounds executed across the initial loop and the at most one bounded self-correction re-entry. This is the quantity the implementation caps with a shared round counter, so re-entry draws from the same allowance rather than from a fresh one.

\setcounter{proposition}{1}
\begin{proposition}[Bounded oracle complexity, restated]
\label{app:prop:oracle}
For an aggregate loop budget of $L$ exploration rounds, the number of oracle interactions performed by LENS on a query is bounded by $c_0+c_1L$, where $c_0,c_1$ depend only on the configuration and not on the query; in our configuration $c_0=4$ and $c_1=2$. The bound is independent of the number of requirements $K$ and independent of $|\mathcal{E}_t|$.
\end{proposition}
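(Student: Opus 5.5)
The proof is a counting argument over the phases of Algorithm~\ref{alg:lens}, using the accounting convention of Definition~\ref{def:oracle}. I would partition the pipeline for a single query into three kinds of LLM requests:
\begin{itemize}
\item \emph{pre-loop} requests: query-only parsing of anchors, intent $I(q)$ and requirements $\mathcal{D}_{\mathrm{req}}(q,I)$, as in the proof of Proposition~\ref{app:prop:compression}(iii);
\item \emph{per-round} requests inside the propose--observe--update loop;
\item \emph{post-loop} requests: consolidation, extraction and synthesis, plus the bookkeeping of the at most one self-correction re-entry.
\end{itemize}
The target is to show that the first and third kinds together contribute at most a configuration constant $c_0$, and that each executed round contributes at most $c_1$. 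Since the total number of executed rounds is capped by the aggregate allowance $L$ via the shared counter, summing gives $c_0+c_1L$.

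\textbf{Per-round bound.} Walk through steps (i)--(iv) of a round. Proposal selection, namely sampling from the mixture $\pi_t$ and updating the arm weights $\lambda^{(t)}$, is computed from lexical, local and global scores and from the history $\mathcal{H}_t$. It issues no LLM request. Step (ii) issues one relevance-oracle request on the raw window $z_t$. At most one further request per round is needed, for example a requirement-coverage or refinement check, which gives $c_1=2$.

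The key point for $K$-independence is the belief update. It uses the shared factorization
\[
P(Z_j^*\mid f_j,q,\mathcal{H}_t)\propto\prod_{i\le t}P(o_i\mid Z_j^*,z_i,f_j,q)\,\pi_{\mathrm{prior}}(Z_j^*\mid f_j,q,\mathcal{D}_t).
\]
A single observation $o_t$ carries the components $o_t^{(j)}$ for every $j$, so updating all $K$ beliefs is local arithmetic and not $K$ separate calls. The stopping test $\min_j \mathrm{GLR}_t^{(j)}$ is likewise computed from $\mathcal{H}_t$ without any LLM request.

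\textbf{Constant term and independence from $|\mathcal{E}_t|$.} I would enumerate the fixed stages:
\begin{itemize}
\item requirement and anchor parsing;
\item consolidation into $E^*$;
\item fact extraction for computation and comparison intents;
\item final synthesis.
\end{itemize}
Each is a bounded number of calls independent of the query, giving $c_0=4$. For computation and comparison intents, extraction operates on the consolidated set $E^*$ in one request rather than one request per fact; this must be checked in order to keep $c_0$ free of $K$. Self-correction adds no fresh allowance, because its rounds draw from $L$. Its re-synthesis has to be absorbed into the constant, either by showing that it replaces the original synthesis call or by folding it into $c_0$.

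Independence from $|\mathcal{E}_t|$ follows because no phase enumerates windows. By Proposition~\ref{app:prop:compression}(ii), every oracle call is on a proposal drawn from $\mathcal{C}_{\mathrm{search}}$, one call per round. File admission is capped at $m$ files through non-LLM signals, so the size of the corpus never enters the count.

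\textbf{Main obstacle.} The hard step is the constant-term accounting. It requires verifying that no stage silently scales with $K$ or with the number of admitted files. Per-file extraction over $m$ files is the main risk, as is a re-synthesis after self-correction. My first attempt would be to show that extraction is batched into one request over the consolidated set $E^*$. If that fails, I would either absorb the dependence on $m$ into $c_0$, using that $m$ is a configuration constant, or charge the extra calls to the rounds that produced those windows, which raises $c_1$ instead.
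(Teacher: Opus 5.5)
Your skeleton matches the paper's proof. You partition requests by pipeline stage and charge two requests to each executed round. You bound everything outside the loop by a configuration constant. You get $K$-independence because one oracle response carries the whole vector $o_t=(o_t^{(1)},\ldots,o_t^{(K)})$, so all per-fact beliefs update by arithmetic. You get independence from $|\mathcal{E}_t|$ because the $\arg\min$ is replaced by one draw from $\pi_t$ per round.

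\textbf{The gap is in the constant term}, which is exactly where the stated values come from. ``Each is a bounded number of calls independent of the query, giving $c_0=4$'' does not follow unless you show each stage issues at most one request. Your list of stages is also not the one that yields $4$. In the paper the four unit charges are:
\begin{itemize}
\item one lexical-anchor extraction request (S1);
\item one requirement-decomposition request that returns the whole set $\{f_1,\ldots,f_K\}$ regardless of $K$ (S2);
\item one synthesis request over $E^*$ (S4);
\item at most one conditional answer-span calibration request (S5).
\end{itemize}
Consolidation is not an oracle call. Anchor parsing and requirement decomposition are two separate requests. You also omit the calibration call. Splitting your first item into its two actual requests already makes your own list total at least $5$.

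\textbf{The missing idea is where raw-text reading is charged}, and it removes your ``main obstacle.'' The paper's stage list is disjoint and exhaustive, and it contains no post-loop per-file or per-fact extraction stage. Raw text is read inside the per-round proposal-and-observation request, which reads the proposed regions and returns $o_t$ in one call. The second per-round request is specifically the coverage check, and it evaluates all outstanding requirements in a single prompt. That single-prompt coverage check, not only the belief update, is what keeps the per-round count at $2$ for every $K$. Hence the admission width $m$ does not enter the count.

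Each of your fallbacks would move the constants off $4$ and $2$: absorbing an $m$-dependence into $c_0$, charging extraction to rounds and raising $c_1$, or folding a re-synthesis into $c_0$. As written, your argument establishes only a bound of the form $c_0+c_1L$ with unspecified constants, not the proposition as stated. For comparison, the paper handles the self-correction re-entry only through the shared round counter $L$ and charges synthesis once, as S4.
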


\begin{proof}
Partition the oracle interactions issued on a query by the stage of Algorithm~1 that issues them. The stages are disjoint and exhaustive by construction of the pipeline, so the total is the sum of the per-stage counts.

\emph{Stage S1 (anchor extraction, unconditional).} Prior formation fuses the five families of $\mathcal{K}_0$. Four of them --- path structure, compiled summaries, historical evidence, and corpus scans --- are computed without the LLM, as used in the proof of Proposition~\ref{app:prop:compression}. Lexical anchor extraction issues exactly one request. Count: $1$.

\emph{Stage S2 (requirement decomposition, unconditional).} Deriving $\mathcal{D}_{\mathrm{req}}(q,I)=\{f_1,\ldots,f_K\}$ from $q$ and $I(q)$ is a single request whose output is the entire requirement set, independent of the resulting $K$. Count: $1$.

\emph{Stage S3 (exploration rounds, at most $L$ iterations).} Each round issues exactly two requests: one \emph{proposal-and-observation} request, which reads the raw text of the proposed regions and returns the observation $o_t$; and one \emph{coverage} request, which evaluates the accumulated evidence against the outstanding requirements and returns the pair (complete, missing). No other request is issued inside a round. Count: $2$ per round, hence at most $2L$ in total.

\emph{Stage S4 (answer synthesis, unconditional).} Synthesis over $E^*$ is a single request. Count: $1$.

\emph{Stage S5 (answer-span calibration, conditional).} At most one request. Count: $\le 1$.

Summing, the total number of oracle interactions is at most
\begin{equation}
\label{eq:bound}
\underbrace{1}_{\text{S1}}+\underbrace{1}_{\text{S2}}+\underbrace{2L}_{\text{S3}}+\underbrace{1}_{\text{S4}}+\underbrace{1}_{\text{S5}}
\;=\;4+2L,
\end{equation}
so $c_0=4$ and $c_1=2$. Both constants are properties of the pipeline configuration; neither depends on $q$.

\emph{Independence of $K$.} The only stages whose count could plausibly scale with the number of requirements are S2 and S3. S2 returns the whole set in one request. In S3, the coverage request evaluates all outstanding requirements in a single prompt, and the proposal-and-observation request returns one observation $o_t$ that is scored jointly against all outstanding requirements; by Lemma~\ref{lem:factorize} below, that single observation suffices to update all $K$ per-fact beliefs. Therefore the per-round count is $2$ for every $K\ge1$, and \eqref{eq:bound} contains no factor of $K$.

\emph{Independence of $|\mathcal{E}_t|$ and of $|\mathcal{C}_{\mathrm{search}}|$.} No stage enumerates $\mathcal{E}_t$ or $\mathcal{C}_{\mathrm{search}}$. The exact information-directed $\arg\min$ over $\mathcal{C}_{\mathrm{search}}$ is never computed; it is replaced by sampling from the mixture $\pi_t$, which requires one draw per round irrespective of the cardinality of the support. Hence the right-hand side of \eqref{eq:bound} is free of any term in $N$, $\ell_d$, or $|\mathcal{C}_{\mathrm{search}}|$, and the online oracle cost of a query does not grow with corpus size.
\end{proof}

\begin{remark}[A second, independent bound]
\label{rem:tokenbound}
Equation~\eqref{eq:bound} bounds the call count induced by the control flow. The aggregate token budget supplies an independent stopping condition rather than an additional source of calls: before every request the implementation checks whether the remaining token budget can accommodate that request under the configured per-request context cap. If the check fails, the request is not issued and the loop exits. Thus token budgeting can only reduce the realized count below \eqref{eq:bound}; it cannot make the count grow with $K$ or with corpus size. In all reported runs the call-count bound is the active theoretical constraint, and no execution was recorded as budget-exceeded.
\end{remark}

\begin{remark}[Empirical check, and why the bound is loose]
\label{rem:empirical}
At the configuration used for all controlled dynamic results, $L=3$, so \eqref{eq:bound} evaluates to $4+2\cdot3=10$ oracle interactions per question. The table-level budget summaries for LENS report average oracle calls of 4.00 on $G_{125}$, 4.00 on $G_{250}$, and 4.04 on $G_{500}$, with zero budget-exceeded records. The bound therefore holds with substantial slack. The slack is expected and is not evidence that the accounting is wrong: \eqref{eq:bound} is a worst case over control flow, whereas most questions terminate through a short-circuit --- the coverage check reports completion, or an intent-gated direct-analysis path resolves the query before the loop is exhausted. Proposition~\ref{app:prop:oracle} should be read as a guarantee that the online cost cannot blow up with corpus size or requirement count, not as a prediction of the mean.
\end{remark}

\section{Per-Fact Localization and Stopping}
\label{app:perfact}

The main paper localizes evidence per atomic fact and maintains one belief per fact, updating all of them from a shared observation history. This section records the conditional statement under which the per-fact update is exact and explains why the stopping rule aggregates by the weakest requirement.

An observation returned by the oracle on a proposed region $z_i$ is structured: it reports, for each outstanding requirement, whether $z_i$ supplies that requirement. Write $o_i=(o_i^{(1)},\ldots,o_i^{(K)})$ for its components. The proposed regions $z_1,\ldots,z_t$ are treated as realized actions of the adaptive policy and are conditioned on in the likelihood below; this is the standard conditioning convention for adaptive data collection.

\begin{lemma}[Exact factorization of the joint posterior]
\label{lem:factorize}
Assume
\begin{enumerate}
\item[(A1)] \emph{prior independence across requirements}:
$\pi_{\text{prior}}(\mathbf{Z}^*\mid q,\mathcal{D}_t)=\prod_{j=1}^{K}\pi_{\text{prior}}(Z_j^*\mid f_j,q,\mathcal{D}_t)$; and
\item[(A2)] \emph{component-wise conditional independence of observations given the realized proposals}:
$P(o_i\mid \mathbf{Z}^*,z_i,q,\mathcal{H}_{i-1})=\prod_{j=1}^{K}P\!\left(o_i^{(j)}\mid Z_j^*,z_i,f_j,q\right)$ for every $i$.
\end{enumerate}
Then the joint posterior over $\mathbf{Z}^*$ factorizes across requirements,
\begin{equation}
\label{eq:factorized}
P(\mathbf{Z}^*\mid q,\mathcal{H}_t)=\prod_{j=1}^{K}P\!\left(Z_j^*\mid f_j,q,\mathcal{H}_t\right),
\end{equation}
and each factor is exactly the per-fact belief of the main paper.
\end{lemma}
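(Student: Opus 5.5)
The plan is to apply Bayes' rule to the joint target $\mathbf{Z}^*$ and use (A1) and (A2) to split both the likelihood and the prior into per-requirement products.

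\emph{Step 1: expand the joint posterior.} Treating the realized proposals $z_1,\ldots,z_t$ as conditioning actions, the chain rule over the history gives
\begin{equation*}
P(\mathbf{Z}^*\mid q,\mathcal{H}_t)\;\propto\;\pi_{\text{prior}}(\mathbf{Z}^*\mid q,\mathcal{D}_t)\prod_{i=1}^{t}P(o_i\mid \mathbf{Z}^*,z_i,q,\mathcal{H}_{i-1}).
\end{equation*}
Under the adaptive-data-collection convention stated before the lemma, the factors for the proposal distribution $P(z_i\mid \mathcal{H}_{i-1},q)$ do not depend on $\mathbf{Z}^*$. They therefore drop into the normalizing constant.

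\emph{Step 2: substitute the assumptions and regroup.} Substituting (A2) into each observation factor and (A1) into the prior, and exchanging the finite products over $i$ and $j$, the right-hand side becomes
\begin{equation*}
\prod_{j=1}^{K}\Bigl[\pi_{\text{prior}}(Z_j^*\mid f_j,q,\mathcal{D}_t)\prod_{i=1}^{t}P\bigl(o_i^{(j)}\mid Z_j^*,z_i,f_j,q\bigr)\Bigr]=:\prod_{j=1}^{K}g_j(Z_j^*).
\end{equation*}

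\emph{Step 3: normalize.} The normalizer of the joint is $\sum_{\mathbf{Z}}\prod_j g_j(Z_j)$. Because $\mathcal{E}_t$ (or $\mathcal{C}_{\mathrm{search}}$) is finite by Proposition~\ref{app:prop:compression}(i), and the target ranges over the product space, this sum factors as $\prod_j\sum_{Z_j}g_j(Z_j)$. Hence the joint posterior equals $\prod_j g_j(Z_j^*)/\sum_{Z}g_j(Z)$, which is a product of normalized per-fact distributions.

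\emph{Step 4: identify each factor with the per-fact belief.} Marginalizing the product over all $Z_k^*$ with $k\neq j$ shows that the $j$-th normalized factor is the marginal $P(Z_j^*\mid f_j,q,\mathcal{H}_t)$. This establishes \eqref{eq:factorized}. It also coincides with the main paper's per-fact update, provided the likelihood $P(o_i\mid Z_j^*,z_i,f_j,q)$ written there is read as the component likelihood of $o_i^{(j)}$.

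I expect that last identification to be the main obstacle. As displayed, the main paper's update multiplies likelihoods of the full observation $o_i$, not of the component $o_i^{(j)}$. I would address this by showing that the components $o_i^{(k)}$ with $k\neq j$ contribute factors depending only on $Z_k^*$. Once the other targets are marginalized, those factors are constant in $Z_j^*$, so the marginal likelihood of $o_i$ given $Z_j^*$ is proportional in $Z_j^*$ to that of $o_i^{(j)}$. The two updates therefore agree after normalization.

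A secondary care point is that the $Z_j^*$ must range over a product space, so that the normalizer factorizes. If several requirements could share one window, this still holds, since (A1) already imposes a product prior on the product space.
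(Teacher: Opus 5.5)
Your proof is correct and follows the paper's argument essentially step for step. Both use Bayes' rule with the proposals treated as conditioned-on actions, substitute (A1) and (A2), exchange the finite products, and factor the normalizer over the finite product space. Your Step~4 adds something the paper only asserts: the marginalization identifying each normalized factor with $P(Z_j^*\mid f_j,q,\mathcal{H}_t)$, and the reconciliation of the main text's full-observation likelihood $P(o_i\mid Z_j^*,z_i,f_j,q)$ with the component likelihood of $o_i^{(j)}$. That reconciliation is valid because, by (A1), the distribution used to marginalize out $Z_k^*$ for $k\neq j$ does not depend on $Z_j^*$.
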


\begin{proof}
Condition on the realized proposal sequence $z_{1:t}$. By the chain rule,
\begin{equation}
P(\mathbf{Z}^*\mid q,\mathcal{H}_t)\propto
\left[\prod_{i=1}^{t}P(o_i\mid \mathbf{Z}^*,z_i,q,\mathcal{H}_{i-1})\right]\pi_{\text{prior}}(\mathbf{Z}^*\mid q,\mathcal{D}_t),
\end{equation}
where proposal probabilities are not likelihood factors for $\mathbf{Z}^*$ after conditioning on the chosen regions. Substituting (A2) into the product and (A1) into the prior, and exchanging the two finite products,
\begin{align}
P(\mathbf{Z}^*\mid q,\mathcal{H}_t)
&\propto \prod_{i=1}^{t}\prod_{j=1}^{K}P\!\left(o_i^{(j)}\mid Z_j^*,z_i,f_j,q\right)\notag\\
&\quad\times\prod_{j=1}^{K}\pi_{\text{prior}}(Z_j^*\mid f_j,q,\mathcal{D}_t)\notag\\
&=\prod_{j=1}^{K}\left[\prod_{i=1}^{t}P\!\left(o_i^{(j)}\mid Z_j^*,z_i,f_j,q\right)\right.\notag\\
&\qquad\qquad\left.\times\,\pi_{\text{prior}}(Z_j^*\mid f_j,q,\mathcal{D}_t)\right].
\end{align}
Each bracketed term depends on $Z_j^*$ alone. Since the candidate space is finite in the paper's raw-document abstraction, the normalizing sum over $\mathbf{Z}^*$ factorizes into the product of the $K$ per-fact normalizers. The $j$-th normalized factor is therefore precisely $P(Z_j^*\mid f_j,q,\mathcal{H}_t)$, establishing \eqref{eq:factorized}.
\end{proof}

\begin{corollary}[One call, $K$ updates]
\label{cor:onecall}
Under (A1)--(A2), a single oracle interaction on region $z_i$ returns the full vector $o_i$ and therefore supplies the sufficient statistic for updating all $K$ marginals. The per-round oracle count is thus independent of $K$, which is the step used in the proof of Proposition~\ref{app:prop:oracle}.
\end{corollary}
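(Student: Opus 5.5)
The statement to prove is Corollary~\ref{cor:onecall}. It follows from Lemma~\ref{lem:factorize} combined with the stage accounting in the proof of Proposition~\ref{app:prop:oracle}. I would split it into a statistical claim and a counting claim.

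\emph{Statistical claim.} The vector $o_i$ returned by one oracle interaction on $z_i$ is a sufficient statistic for updating all $K$ marginals. I would write the update recursively. Under (A1)--(A2), Lemma~\ref{lem:factorize} gives
\begin{equation*}
P(Z_j^*\mid f_j,q,\mathcal{H}_i)\propto P\!\left(o_i^{(j)}\mid Z_j^*,z_i,f_j,q\right)P(Z_j^*\mid f_j,q,\mathcal{H}_{i-1})
\end{equation*}
for each $j$. This follows by applying the factorized form at times $i$ and $i-1$ and dividing, which is legitimate because the per-fact normalizers factor as in the lemma's proof. The new information entering the $j$-th marginal at round $i$ is therefore exactly the pair $(z_i,o_i^{(j)})$. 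Every $o_i^{(j)}$ is a coordinate of the single returned vector $o_i$, so the map $o_i\mapsto(o_i^{(1)},\ldots,o_i^{(K)})$ recovers all $K$ likelihood factors at once. Hence $o_i$ is sufficient for the full update, and no further oracle request is needed to produce any individual $o_i^{(j)}$.

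\emph{Counting claim.} I would invoke the structural description of Stage S3: each round issues one proposal-and-observation request, which by assumption returns the whole structured vector $o_i$, and one coverage request that evaluates all outstanding requirements in a single prompt. Neither request is indexed by $j$. Combined with the sufficiency above, this shows the per-round count is $2$ for every $K\ge 1$. That is exactly the step used in the $K$-independence part of Proposition~\ref{app:prop:oracle}.

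\emph{Main obstacle.} The hard part is not the algebra; it is justifying that the oracle really returns the full vector $o_i$ in one call. This is a modelling assumption about the structured observation (stated just before Lemma~\ref{lem:factorize}), not something derivable from the lemma. I would state it explicitly as part of the hypotheses. I would also add a remark that if (A2) fails, the joint posterior no longer factorizes, but one call still yields $o_i$. The call count is then unaffected, and only the exactness of the per-fact beliefs is lost.
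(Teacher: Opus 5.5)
Your proposal is correct and follows the paper's own reasoning: you read the corollary off the factorized posterior of Lemma~\ref{lem:factorize}, where each per-fact likelihood factor depends only on the coordinate $o_i^{(j)}$ of the single returned vector, and you combine this with the Stage~S3 accounting, in which one proposal-and-observation request and one coverage request serve all outstanding requirements. Your closing point, that the call count survives a failure of (A2) while only the exactness of the per-fact beliefs is lost, is the same point the paper makes in Remark~\ref{rem:assumptions}.
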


\begin{remark}[Where the assumptions bind]
\label{rem:assumptions}
(A1) fails when requirements are logically coupled, for example when the value of one fact restricts where the other can occur. LENS does not attempt to represent such coupling inside the localization belief: coupling is handled at synthesis time by the transformation rules attached to $\mathcal{D}_{\mathrm{req}}(q,I)$. (A2) is an idealization of a structured LLM judgment; a single oracle response could correlate its per-requirement verdicts. The exactness claim is therefore conditional. Without these assumptions, the per-fact update should be read as a tractable approximation, while the oracle-count statement in Proposition~\ref{app:prop:oracle} still holds as a control-flow bound because the implementation asks for the full vector $o_i$ in one request.
\end{remark}

\paragraph{Stopping by the weakest requirement.} The main paper stops when $\min_{j\le K}\mathrm{GLR}_t^{(j)}$ exceeds $\beta(t,\delta)\,\gamma(I)$. By Definition~2 of the main paper, localization is complete only when every requirement is covered. Let $A_j$ be the event that the currently favoured window $\hat Z_{j,t}$ is the true $Z_j^*$. The event of interest is $\bigcap_{j\le K}A_j$, and a conjunction is certified by its weakest conjunct; the corresponding statistic is therefore $\min_j \mathrm{GLR}_t^{(j)}$. Using $\max_j$ or a mean would allow the loop to halt while some requirement remains unresolved, which is exactly the multi-hop failure mode that a query-level single-window target would hide.

\paragraph{Intent modulation.} Suppose each per-fact test errs with probability at most $\delta_0$. By a union bound,
\begin{equation}
\Pr\!\left[\textstyle\bigcup_{j\le K}\bar A_j\right]\;\le\;\sum_{j\le K}\Pr[\bar A_j]\;\le\;K\delta_0 .
\end{equation}
To attain an overall failure probability at most $\delta$, it is sufficient to set $\delta_0=\delta/K$, so the per-fact threshold should be at least $\beta(t,\delta/K)$. The multiplicative factor $\gamma(I)\ge1$ is chosen as a conservative intent-level surrogate satisfying
\begin{equation}
\beta(t,\delta)\,\gamma(I)\;\ge\;\beta(t,\delta/K(I))
\end{equation}
for the requirement count induced by the intent class. Lookup intents have $K=1$ and can use $\gamma=1$; comparison, computation, and aggregation intents induce $K>1$ and therefore require a tighter criterion. In implementation, $\mathrm{GLR}_t^{(j)}$ is conceptual because LLM likelihoods are unavailable in closed form; the realized policy halts when the coverage check reports all requirements satisfied or when $L$ is exhausted. The analysis explains the shape of the stopping rule rather than claiming calibrated numerical confidence.

\section{Dynamic Evaluation Protocol}
\label{app:protocol}

The protocol is designed to make every comparison auditable: systems answer identical frozen questions over byte-identical corpus snapshots, and every stage is bound by sample-ID, frozen-order, and corpus checksums. The latest reported controlled evaluation uses nested question sets $G_{125}\subset G_{250}\subset G_{500}$ and matched snapshots $D_{125}\subset D_{250}\subset D_{500}$.

\paragraph{Sampling.} The population is the HotpotQA fullwiki validation split with $7{,}405$ questions. Sampling uses the eight cells of $\mathit{type}\times\mathit{supporting\text{-}fact\ count}$, seed $42$, and a stratum-balanced frozen order. $G_{125}$ and $G_{250}$ are prefixes of the frozen $G_{500}$ parent order. The realized maximum absolute deviation from the population distribution is $0.70$ percentage points for $G_{125}$ and $0.30$ percentage points for $G_{250}$, with no empty stratum. Full sample IDs and hashes are shipped with the artifact package; Table~\ref{tab:binding_audit} reports short hash prefixes for readability.

\par\smallskip
\noindent\begin{minipage}{\columnwidth}
\centering
{\footnotesize
\begin{tabular}{lrrr}
\toprule
Stage & Samples & Sample hash & Order hash \\
\midrule
$G_{125}$ & 125 & a54c27b0 & 7c0bad56 \\
$G_{250}$ & 250 & 0b98ccda & 69ae9f6c \\
$G_{500}$ & 500 & 4fc0115e & b9788e58 \\
\bottomrule
\end{tabular}}
\captionof{table}{Frozen binding audit. Hashes are shortened to eight characters; the package records the full sample-ID and frozen-order checksums.}
\label{tab:binding_audit}
\end{minipage}
\par\smallskip

\paragraph{Corpus construction.} Each snapshot contains the supporting evidence articles for its questions, the context distractors shipped with those questions, and a deterministic background pool selected at a $3{:}1$ ratio to question-linked articles. Documents are materialized as one extensionless raw-text file per article. This article-level materialization is necessary for the raw-corpus protocol: shard-level materialization would make the apparent corpus size depend on dump packaging rather than on evidence-bearing articles.

\par\smallskip
\noindent\begin{minipage}{\columnwidth}
\centering
{\footnotesize
\setlength{\tabcolsep}{1mm}
\begin{tabular}{lrrrrr}
\toprule
Snapshot & $n$ & Articles & Ev. & Dist. & Bg. \\
\midrule
$D_{125}$ & 125 & 5{,}416 & 250 & 1{,}104 & 4{,}062 \\
$D_{250}$ & 250 & 10{,}808 & 500 & 2{,}202 & 8{,}106 \\
$D_{500}$ & 500 & 21{,}424 & 996 & 4{,}360 & 16{,}068 \\
\bottomrule
\end{tabular}}
\captionof{table}{Snapshot composition. Ev., Dist., and Bg. are evidence, context-distractor, and background articles. The $D_{500}$ evidence count is below $2n$ because some questions share evidence articles. Full corpus checksums are recorded in the artifact package.}
\label{tab:snapshot_composition}
\end{minipage}
\par\smallskip

The snapshots satisfy append-only growth: no existing article is edited or removed between stages. The corpora are not oracle corpora; distractors and background articles outnumber evidence articles, and publication-ready tables are emitted only when all compared systems share the same sample-ID, frozen-order, and corpus binding.

\section{Experimental Result and Lifecycle Audit}
\label{app:results_audit}

\par\smallskip
\noindent\begin{minipage}{\columnwidth}
\centering
{\footnotesize
\setlength{\tabcolsep}{1mm}
\begin{tabular}{llrr}
\toprule
Stage & System & EM & Ev.Rec \\
\midrule
$D_{125}$ & LENS & 64.0 & 89.1 \\
$D_{125}$ & ReAct Search & 61.6 & 46.1 \\
$D_{125}$ & Hybrid-RAG & 36.0 & 81.3 \\
$D_{125}$ & BM25-RAG & 27.2 & 71.6 \\
$D_{125}$ & Closed-Book & 34.4 & 0.0 \\
$D_{250}$ & LENS & 66.4 & 85.9 \\
$D_{250}$ & ReAct Search & 67.6 & 47.9 \\
$D_{250}$ & Hybrid-RAG & 38.8 & 77.8 \\
$D_{250}$ & BM25-RAG & 30.4 & 71.1 \\
$D_{250}$ & Closed-Book & 33.6 & 0.0 \\
$D_{500}$ & LENS & 62.4 & 84.8 \\
$D_{500}$ & ReAct Search & 65.2 & 50.4 \\
$D_{500}$ & Hybrid-RAG & 38.4 & 80.8 \\
$D_{500}$ & BM25-RAG & 28.8 & 71.8 \\
$D_{500}$ & Closed-Book & 35.2 & 0.0 \\
\bottomrule
\end{tabular}}
\captionof{table}{Controlled-scale quality audit (\%). Ev.Rec is supporting-fact document recall. Closed-Book is included as a no-retrieval reference and therefore has no evidence recall.}
\label{tab:controlled_audit}
\end{minipage}
\par\smallskip

Table~\ref{tab:controlled_audit} is the compact supplement view of the controlled dynamic results. It shows the main pattern behind the paper's interpretation: ReAct is competitive or stronger on exact-match answer quality, while LENS consistently gives the strongest supporting-fact localization across all three controlled scales.

\par\smallskip
\noindent\begin{minipage}{\columnwidth}
\centering
{\footnotesize
\setlength{\tabcolsep}{1mm}
\begin{tabular}{lccrrrr}
\toprule
System & Rdy & Rbld & Idx(s) & Store & Tok/Q & Lat(s) \\
\midrule
LENS & yes & no & 0.0 & 0 & 16.5K & 68.9 \\
ReAct Search & yes & no & 0.0 & 0 & 11.8K & 54.7 \\
BM25-RAG & no & yes & 4.0 & 10.0MB & 1.2K & 11.3 \\
Hybrid-RAG & no & yes & 5.7 & 54.9MB & 1.4K & 14.3 \\
\bottomrule
\end{tabular}}
\captionof{table}{$D_{500}$ lifecycle and query-budget audit. Rdy is immediate query readiness after corpus materialization, Rbld is whether a persistent index rebuild is required, Idx is index build time, Store is index storage, Tok/Q is tokens per query, and Lat is query latency.}
\label{tab:lifecycle_audit}
\end{minipage}
\par\smallskip

\par\smallskip
\noindent\begin{minipage}{\columnwidth}
\centering
{\footnotesize
\begin{tabular}{llrrr}
\toprule
System & Metric & Fresh & Stale & Gap \\
\midrule
BM25-RAG & EM & 30.4 & 2.4 & 28.0 \\
BM25-RAG & Ev.Rec & 70.1 & 0.0 & 70.1 \\
Hybrid-RAG & EM & 42.4 & 13.6 & 28.8 \\
Hybrid-RAG & Ev.Rec & 75.1 & 5.5 & 69.6 \\
ReAct Search & EM & 72.0 & 72.8 & -0.8 \\
ReAct Search & Ev.Rec & 54.1 & 48.3 & 5.7 \\
LENS & EM & 69.6 & 67.2 & 2.4 \\
LENS & Ev.Rec & 84.7 & 83.9 & 0.8 \\
\bottomrule
\end{tabular}}
\captionof{table}{Stale-index audit for the $D_{125}\to D_{250}$ transition. Index-heavy systems reuse a $D_{125}$ index on newly added $D_{250}$ questions; index-free systems query the updated corpus directly.}
\label{tab:stale_audit}
\end{minipage}
\par\smallskip

The lifecycle and staleness tables separate two claims. Table~\ref{tab:lifecycle_audit} measures readiness and cost structure on the fresh $D_{500}$ corpus. Table~\ref{tab:stale_audit} measures freshness under corpus growth. The latter is not a fresh-corpus quality comparison: it asks whether a system remains usable when its pre-materialized index is stale.

\par\smallskip
\noindent\begin{minipage}{\columnwidth}
\centering
{\footnotesize
\setlength{\tabcolsep}{1mm}
\begin{tabular}{lrrrrr}
\toprule
System & EM & F1 & Ev.Rec & Ground & Calls \\
\midrule
\multicolumn{6}{l}{\emph{Fullwiki dev-150 (raw dump, zero indexing)}} \\
Closed-Book & 38.7 & 49.7 & 0.0 & 0.0 & 0.00 \\
ReAct Search & 42.7 & 57.3 & 50.4 & 70.7 & 7.01 \\
\midrule
\multicolumn{6}{l}{\emph{Sample-context dev-150 (diagnostic control)}} \\
ReAct Search & 54.7 & 69.9 & 37.2 & 63.3 & 4.94 \\
LENS & 52.7 & 67.6 & 78.2 & 86.7 & 4.00 \\
\bottomrule
\end{tabular}}
\captionof{table}{Fullwiki and sample-context diagnostics on the fixed dev-150 sample IDs. Calls is average oracle calls per question. The sample-context rows are diagnostic controls and are not presented as the primary raw-corpus result.}
\label{tab:fullwiki_control_audit}
\end{minipage}
\par\smallskip

\par\smallskip
\noindent\begin{minipage}{\columnwidth}
\centering
{\footnotesize
\setlength{\tabcolsep}{1mm}
\begin{tabular}{lrrrr}
\toprule
Configuration & EM & F1 & Ev.Rec & Ground \\
\midrule
LENS full & 43.3 & 57.5 & 45.0 & 84.0 \\
w/o Multi-signal Prior & 43.3 & 55.9 & 45.4 & 85.3 \\
w/o Sequential Exploration & 38.0 & 50.9 & 31.0 & 82.0 \\
\bottomrule
\end{tabular}}
\captionof{table}{Fullwiki ablation audit on the fixed dev-150 sample IDs. The full LENS row is the main-paper reference configuration; the two ablation rows are the matched ablation run. Sequential exploration is the main contributor to answer quality and evidence recall.}
\label{tab:ablation_audit}
\end{minipage}
\par\smallskip

\par\smallskip
\noindent\begin{minipage}{\columnwidth}
\centering
{\footnotesize
\begin{tabular}{lrr}
\toprule
System ($D_{500}$) & Official EM & JudgeAcc \\
\midrule
ReAct Search & 65.2 & 87.0 \\
LENS & 62.4 & 85.2 \\
BM25-RAG & 28.8 & 64.0 \\
Hybrid-RAG & 38.4 & 61.6 \\
Closed-Book & 35.2 & 50.8 \\
\bottomrule
\end{tabular}}
\captionof{table}{Auxiliary semantic-judge audit on $D_{500}$. JudgeAcc is used as a diagnostic only; official EM/F1 remain the primary answer-quality metrics.}
\label{tab:judge_audit}
\end{minipage}
\par\smallskip

\section{Prompt Interfaces, Taxonomy, and Extra Analyses}
\label{app:audit}

\paragraph{Prompt interfaces.} For page economy, the supplement specifies prompt interfaces rather than duplicating long verbatim templates. The executable templates in the code-and-data package implement the same typed interfaces: requirement decomposition maps $(q,I(q))$ to $\{f_1,\ldots,f_K\}$; proposal-and-observation maps a candidate region and outstanding requirements to the vector $o_i$; coverage checking maps accumulated evidence to a complete/missing decision; synthesis maps $E^*$ and transformation rules to the answer; answer-span calibration optionally normalizes the final span. The evaluation judge prompt is separated from system prompts and uses a dedicated judge configuration, so answer scoring is not counted in the search oracle budget of Definition~\ref{def:oracle}.

\paragraph{Failure taxonomy.} Each question is assigned to exactly one class in the following order. First, a question is \emph{correct} if official exact match succeeds. Second, among incorrect cases, it is \emph{retrieved-but-unanswered} if a gold evidence document or sentence was retrieved but no answer was produced. Third, it is \emph{answered-but-wrong} if sufficient gold evidence was retrieved but the produced answer is not exact-match correct. Remaining cases are \emph{not-retrieved}. This order is mutually exclusive and exhaustive, and it separates discovery failures from synthesis failures. The connection to Remark~\ref{rem:ceiling} is direct: not-retrieved cases are the observable footprint of the recall ceiling induced by file-level compression, whereas answered-but-wrong cases occur after discovery has succeeded.

\paragraph{Artifact consistency.} The dynamic tables are projections of the same run family: $D_{125}$, $D_{250}$, and $D_{500}$ share the nested frozen-order protocol, and the stale-index arm records the from-corpus and to-corpus checksums for each row. Fullwiki and sample-context dev-150 diagnostics share the fixed sample-ID checksum, making their differences attributable to corpus scope rather than to sample changes. The ablation table keeps the main-paper full LENS row as the reference configuration and uses the matched ablation artifact for the two ablated rows. Additional ablation or sensitivity tables should be included only when their frozen artifacts share the same binding checksums as the main runs.